%% file: tvso.tex
\documentclass{article}

\usepackage[accepted]{icml2026}

\input{math_commands.tex}

\definecolor{aaltoBlack}{RGB}{0,0,0}%
\definecolor{aaltoGray}{RGB}{146,139,129}%
\definecolor{aaltoRed}{RGB}{237,41,57}%
\definecolor{aaltoBlue}{RGB}{0,101,189}%
\definecolor{aaltoYellow}{RGB}{254,203,0}%
\definecolor{aaltoPurple}{RGB}{102,57,183}%
\definecolor{aaltoTurquoise}{RGB}{0,168,180}%
\definecolor{aaltoGreen}{RGB}{0,155,58}%
\definecolor{aaltoLightGreen}{RGB}{105,190,40}%
\definecolor{aaltoOrange}{RGB}{255,121,0}%
\definecolor{aaltoFuchsia}{RGB}{177,5,157}%
\usepackage{url}
\usepackage{microtype}
\usepackage{graphicx}
\usepackage{booktabs}
\usepackage{hyperref}
\usepackage{lipsum} 
\usepackage{wrapfig}
\usepackage{subcaption}

\usepackage{hyperref}
\usepackage{url}
\usepackage{amsmath}
\usepackage{amsfonts}
\usepackage{graphicx}
\usepackage{textcomp}
\usepackage{booktabs}
\usepackage{xcolor} 
\usepackage{epsfig}
\usepackage{tabularx}
\usepackage[ruled,vlined]{algorithm2e}
\usepackage{algpseudocode}
\usepackage{enumitem}
\usepackage{lipsum}
\usepackage{bbm}
\usepackage{relsize}
\usepackage{multicol}
\usepackage{multirow}
\usepackage{wrapfig}
\usepackage{mathtools}
\usepackage{amsthm}
\usepackage[capitalize]{cleveref}
\usepackage[hang,flushmargin]{footmisc}

\usepackage[most]{tcolorbox}
\usepackage[scaled=.95]{inconsolata}

\definecolor{darkgreen}{rgb}{0.00, 0.39, 0.00}
\definecolor{darkred}{rgb}{0.55, 0.00, 0.00}
\definecolor{darkblue}{rgb}{0.00, 0.00, 0.8}
\definecolor{mycyan}{RGB}{43,166,166}
\tcbset{
	graybox/.style={
		colback=gray!7,       %
		colframe=gray!7,      %
		boxrule=0pt,           %
		arc=0pt,               %
		left=4pt, right=4pt,   %
		top=4pt, bottom=4pt    %
	}
}

\makeatletter
    \patchcmd\caption@subtypehook{\let\label\subcaption@label}%
	{\let\label\subcaption@label\let\ltx@label\subcaption@label}{}{\fail}
\makeatother

\theoremstyle{plain}
\newtheorem{theorem}{Theorem}[section]
\newtheorem{proposition}[theorem]{Proposition}

\theoremstyle{definition}

\newtheorem{remark}[theorem]{Remark}

\newcommand{\fakeparagraph}[1]{\vspace{.5em}\noindent\textbf{#1}.\hspace{0.25em}}

\SetKwFor{For}{for}{}{endfor}

\crefname{figure}{Fig.}{Figs.}
\Crefname{figure}{Fig.}{Figs.}
\crefname{table}{Tab.}{Tabs.}
\Crefname{table}{Tab.}{Tabs.}
\crefformat{section}{§#2#1#3}
\Crefformat{section}{§#2#1#3}
\crefalias{appsection}{appendix}
\crefalias{appsubsection}{appendix}
\crefalias{appsection}{Appendix}
\crefalias{appsubsection}{Appendix}
\AddToHook{cmd/appendix/before}{\crefalias{section}{appendix}}

\crefname{algorithm}{Alg.}{Algs.}
\Crefname{algorithm}{Alg.}{Algs.}
\creflabelformat{appsection}{#2\textup{App.~}#1#3}
\crefformat{appendix}{Appendix~#2#1#3}
\Crefformat{appendix}{Appendix~#2#1#3}

\setlist{leftmargin=*, itemsep=0pt, topsep=0pt}

\icmltitlerunning{Non-Parametric Optimization for Scalable Learning in Stochastic Decision Problems}

\begin{document}

\twocolumn[

\icmltitle{Non-Parametric Optimization for Scalable Learning\\in Stochastic Decision Problems}

\icmlsetsymbol{equal}{*}

\begin{icmlauthorlist}
	\icmlauthor{Mohsen Amidzadeh}{yy}
	\icmlauthor{Lauri Viitasaari}{zz}
	\icmlauthor{Mario Di Francesco}{yy}
\end{icmlauthorlist}

\icmlaffiliation{yy}{Department of Computer Science, Aalto University, Finland}
\icmlaffiliation{zz}{Department of Information and Service Management, Aalto University, Finland}

\icmlcorrespondingauthor{Mohsen Amidzadeh}{\texttt{mohsen.amidzade@aalto.fi}}
\icmlcorrespondingauthor{Lauri Viitasaari}{\texttt{lauri.viitasaari@aalto.fi}}

\icmlkeywords{Stochastic optimization, Non-parametric methods}

\vskip 0.3in
]

\printAffiliationsAndNotice{}

\begin{abstract}
	Stochastic optimization (SO) plays a central role in decision-making under uncertainty. 
	Among SO problems, time-varying stochastic optimization (TV-SO)
	is particularly important due to its applications in adaptive control and machine learning. 
	Non-parametric approaches have been proposed for time-varying 
	\emph{deterministic} optimization; however, they have not been developed for their stochastic counterparts. This work addresses that gap by developing a stochastic variational framework based on Malliavin calculus. 
	This framework yields non-parametric optimality conditions for SO problems with stochastic decisions and supports the design of a scalable deep-learning algorithm that is insensitive to the parameterization dimension. This algorithm, called the Stochastic Path Follower (SPF), is applied to two important problems under distribution drift, namely least-squares recovery and logistic regression. Experimental results show that the proposed approach outperforms state-of-the-art learning-based and gradient-based methods in both performance and scalability.

\end{abstract}

\section{Introduction}

Stochastic optimization (SO) studies decision-making under uncertainty and has wide applications, for instance, 
in mathematical finance~\citep{strini2019,buehler2019deep}, 
control theory~\citep{Jin2022}, machine learning~\citep{bottou2018,sun2020,
Levine2020}. 
There has been extensive research on finding the optimal solution of such problems when the decision variable is deterministic~\citep{duchi2016,Cao2021,Cutler2021}. Recent advances have also addressed SO problems with stochastic decisions through neural network (NN) parameterizations~\citep{tzen2019neural,Li2020SDE,oh2019kernel,Kidger2021GAN}.
However, optimality conditions for \emph{SO problems with stochastic decisions} would benefit greatly from being insensitive 
to the parameterization dimension, thereby improving scalability.

This work addresses this gap by developing \emph{non-parametric} necessary conditions for optimally solving SO problems with stochastic decisions. The resulting non-parametric methodology leads to an algorithm that is independent of the parameterization dimension and is therefore both efficient and scalable. In particular, we focus on time-varying stochastic optimization (TV-SO), a class of SO problems in which the expected objective and\,/\,or constraints vary over time.
TV-SO has applications in important fields including adaptive control~\citep{Fazlyab2018,Simonetto2020,Simonetto2024} 
and machine learning~\citep{Cutler2023,Maity2023PredictCorect}, especially in settings where the data distribution changes over time (distribution drift) and the underlying environment is non-stationary.
Our approach is complementary to the dynamic-programming principle commonly used for sequential decision-making problems modeled by stochastic differential equations (SDEs) or formulated as Markov decision processes (MDPs)~\citep{FlemingSoner2006,BouchardTouzi2011,Puterman2014,amidzadeh2025fbmoac}. From a different analytical angle, the proposed Malliavin-based framework studies time-varying stochastic objectives through non-parametric optimality conditions.

There is a rich theory for analyzing optimality conditions in time-varying %
\emph{deterministic} problems~\citep{Simonetto2020,komzsik2019}; however, their stochastic counterparts have received much less attention in the literature.
Hence, we leverage Malliavin calculus to develop a stochastic variational framework for deriving optimality conditions in SO problems. Malliavin calculus is essential in this context, as it enables the formulation of \emph{non-parametric} optimality conditions. The resulting theory connects deterministic and stochastic optimization by generalizing classical optimality conditions to the stochastic setting. Since these conditions do not yield a closed-form expression for the optimal stochastic solution, we design a deep-learning algorithm, called the Stochastic Path Follower (SPF), to approximate it. This algorithm is built on a non-parametric framework that makes the learning procedure largely insensitive to the parameterization dimension, thereby providing a scalable and low-complexity approach. %
Pathwise differentiation methods (PDMs), which estimate gradients of SDE-driven systems~\citep{tzen2019neural,liu2019neural}, can also be used to solve the SO problems of interest through a parameterized NN. However, their learning mechanism strongly depends on the parameterization dimension; thus, it does not scale to large models. %
Tab.~\ref{Table:complexity} compares our proposed SPF against PDM.

\begin{table}[t!]
	\centering
	\caption{Comparison of the SPF and PDM algorithms, 
        where $n$ denotes the dimension of the decision variable 
        and $\phi$ is the parameterization dimension.
        For SPF, both the required SDEs and the learning loss 
        do not depend on $\phi$, while they do for PDM.}
	\label{Table:complexity}
	\small
	\begin{tabular}{lll}
		\toprule
		\multirow{2}{*}{\textbf{Algorithm}} & \textbf{Dimension of}  		& \textbf{SDEs and learning loss}   \\[-2pt]
		& \textbf{required SDEs}      &  \textbf{explicitly depend on $\phi$}     \\
		\cmidrule(lr){1-1}\cmidrule(lr){2-3}
		SPF (ours) &  $n + n^2$  & \textcolor{darkgreen}{No}  \\
		PDM          &  $n + n$\textcolor{darkred}{$\phi$}    & \textcolor{darkred}{Yes}     \\
		\bottomrule
	\end{tabular}
	\small
	\vspace{-10 pt}
\end{table}

In summary, we establish the following contributions.

\begin{itemize}
	\item We study time-varying stochastic optimization\footnote{\cref{App:FSO} also addresses functional stochastic optimization as a different class of SO problems.} (TV-SO). %
	We then develop a non-parametric variational framework based on Malliavin calculus to derive optimality conditions for TV-SO problems.
	\item We accordingly develop an efficient NN algorithm to solve TV-SO. This algorithm computes an optimal solution in a manner that is largely insensitive to the parameterization dimension and is therefore both scalable and efficient.
	\item We compare our proposed solutions with state-of-the-art (SOTA) algorithms through experiments on two important use cases: least-squares recovery and logistic regression. %
	The experimental results show that our algorithm outperforms SOTA methods in terms of both performance and computational complexity.
\end{itemize}

\section{Related Work}
\label{sec:related-work}

\fakeparagraph{Time-Varying Optimization}
This work is broadly related to time-varying optimization problems. Such problems involve parametric programming, in which the objective function, the constraints, or both vary continuously over time~\citep{Simonetto2020}. Since optimality is sought over a given time horizon, the optimality conditions yield trajectories that can be represented by an ordinary differential equation (ODE), termed a path-differential equation~\citep{Fazlyab2018}. This ODE traces the optimal trajectory over time while accounting for the time-varying nature of the problem.

In contrast, this paper considers time-varying \emph{stochastic} optimization with \emph{stochastic decisions} as a novel research setting and develops optimality conditions that characterize the evolution of optimal stochastic paths over time. We employ a diffusion process to solve these optimality conditions, which leads to a stochastic path-differential equation. In this sense, the resulting optimality conditions constitute a stochastic generalization of those for time-varying optimization problems.

Our work is also related to the gradient-based algorithm developed in~\citep{Cutler2023}, termed proximal stochastic gradient (PSG). PSG addresses convex optimization problems under distribution drift and tracks the optimal solution using a stochastic algorithm with iterate averaging. In contrast, we directly target the optimality conditions and derive a structured search direction based on the proposed stochastic variational framework.

\fakeparagraph{Neural Stochastic Differential Equations}
Recent works estimate gradients of SDE-driven functions using NN parameterization with backpropagation or adjoint methods~\citep{Li2020SDE,Kidger2021GAN}. However, they do not address the joint optimization of a sequence of objectives over an entire time horizon, and they inherently adopt parametric methodologies: they optimize a finite set of network weights rather than the stochastic decision itself. In contrast, our framework performs joint optimization of SO problems over the whole horizon using a non-parametric and scalable methodology, without being restricted to a specific parameterization. %

\section{Background}
\label{sec:background}

This section provides the background needed on time-varying optimization
and key results from Malliavin calculus.

\subsection{Time-Varying Optimization}
\label{sec:tvo}

Consider the following time-varying optimization (TV-O) problem:
\begingroup\makeatletter\def\f@size{9.5}\check@mathfonts
\begin{align}\label{eq:TVO}
\mbox{TV-O:}\quad \min_{(\textbf{x}_t)_{t\in[0,T]}}~~f(\textbf{x}_t,t),\quad \forall t\in [0,T], 
\end{align}
\endgroup
where $\textbf{x}_0$ is given, $f:\mathbb{R}^n\times [0,T] \to \mathbb{R}$ is a twice-differentiable time-varying function,
and the goal is to track the sequence of solutions $(\textbf{x}_t)_{ t\in[0,T] }$ over time.
Since the problem must be optimized\footnote{Note that minimizing the aggregated objective $\int_0^T |f(\textbf{x}_t,t)|,dt$ does not, in general, yield the desired solution, since the minimizer of $f(\textbf{x}_t,t)$ may change sign at different time instances.} jointly for all $t\in [0,T]$, the optimality condition $\nabla_{\textbf{x}} f(\textbf{x}_t,t) = \textbf{0}$ must hold for every $t$. %
This leads to the following ordinary differential equation (ODE):
\begingroup\makeatletter\def\f@size{9.75}\check@mathfonts
\begin{align}\label{eq:dynamical-system}
	\frac{d }{dt} \nabla_\textbf{x}f(\textbf{x}_t,t) 
	= \nabla_{\textbf{x}}^2 f(\textbf{x}_t,t) \frac{d\textbf{x}_t}{dt}+ \frac{\partial}{\partial t}\nabla_{\textbf{x}} f(\textbf{x}_t,t) = \textbf{0}.
\end{align}
\endgroup
A path-following method then tracks the solution of TV-O by solving this ODE over $t\in[0,T]$~\citep{Kungurtsev2017,Amidzadeh2024}, using an ODE solver such as Euler's method. Thus, the optimal solution necessarily satisfies the following energy-functional loss:
\begin{align}\label{eq:non-parametric-loss}
	\int_0^T \left\| \nabla_{\textbf{x}}^2 f(\textbf{x}_t,t) \frac{d\textbf{x}_t}{dt}+ \frac{\partial}{\partial t}\nabla_{\textbf{x}} f(\textbf{x}_t,t)  \right\|^2 dt.
\end{align}

\subsection{Malliavin Calculus}
\label{sec:malliavin-derivative}

Malliavin calculus provides analytical tools to study the sensitivity of stochastic processes with respect to variations of a Wiener process $\textbf{W}=(w_s)_{s\in[0,t]}$. Consider a Wiener-driven process $x_t = \int_0^t g_\tau dw_\tau$ with a deterministic function $g_\tau$; its Malliavin derivative with respect to $w_s$, denoted by $D_s x_t$, is defined as $D_s x_t = g_s$ for all $s\leq t$. The chain rule also holds for this derivative, implying that for a Wiener-driven process $u_t$ and a smooth functional $F(\cdot)$ in the domain of the Malliavin derivative, $D_s F(u_t) = \frac{\partial F(u_t)}{\partial u_t} D_s u_t$.
Accordingly, we define the process $y_t = \int_0^t h_\tau dw_\tau$, where $h_\tau$ is Malliavin-differentiable, square-integrable, and adapted to the filtration generated by $w$. Its Malliavin derivative with respect to $w_s$ is then given by $D_s y_t = h_s + \int_0^t D_s h_\tau dw_\tau$. Throughout the rest of the paper, we consider processes with the same square-integrability and Malliavin-differentiability properties.

\fakeparagraph%
{Integration by Parts}
The integration-by-parts formula in Malliavin calculus~\citep{alos2021malliavin,nualart2018introduction} plays a central role in this work. It states that for any Malliavin-differentiable functional $F(\cdot)$ and any adapted, square-integrable process $u_s$, the following expression holds:
\begingroup\makeatletter\def\f@size{9.25}\check@mathfonts
$$
\mathbb{E}\left\{ F \int_0^t u_s dw_s  \right\} = \mathbb{E}\left\{ \int_0^t D_s F \:u_s ds \right\}.
$$
\endgroup

\fakeparagraph%
{Malliavin Derivative of SDEs}
We now consider the $n$-dimensional SDE: \vspace{-5 pt}
\begin{align}\label{Eq:nonparametricSDE}
	d\textbf{x}_t = \textbf{a}(\textbf{x}_t, t) dt + \sum_{l=1}^d \textbf{b}_l(\textbf{x}_t, t) dw^l_t
\end{align}
where $\textbf{a}: \mathbb{R}^n\!\times\![0,T] \!\to\! \mathbb{R}^n $ 
and $\textbf{B}=[\textbf{b}_1,...,\textbf{b}_d]:\mathbb{R}^n\!\times\![0,T] \!\to\! \mathbb{R}^{n\times d}$ 
are measurable functions with bounded partial derivatives.
Under these conditions, the Malliavin derivative of $\textbf{x}_t$ with respect to $w_s^i$ is given by~\citep{nualart2018introduction}:
\begingroup\makeatletter\def\f@size{8}\check@mathfonts
\begin{align}\label{Eq:MalliavinDerv0}
	\!\!\!\!	D_s^i \textbf{x}_t = \textbf{b}_i(\textbf{x}_t)\!
	+\!\sum_{l=1}^d \!\int_s^t \!\!\! \mathcal{J}_{\textbf{x}}\textbf{b}_l(\textbf{x}_\tau) D_s^i \textbf{x}_\tau d w_\tau^l \!
	+\!\int_s^t \!\!\! \mathcal{J}_{\textbf{x}}\textbf{a}(x_\tau) D_s^i \textbf{x}_\tau d \tau,
\end{align}
\endgroup
for $i\in\{1,...,d\}$, where $\mathcal{J}_{\textbf{x}} \textbf{a}$ (respectively, $\mathcal{J}_{\textbf{x}} \textbf{b}_l$) denotes the Jacobian matrix of the vector function $\textbf{a}$ (respectively, $\textbf{b}_l$) with respect to $\textbf{x}$.

In addition to~\cref{Eq:MalliavinDerv0}, we can express the Malliavin derivative of $\textbf{x}_t$ in terms of the stochastic flow $\frac{\partial \textbf{x}_t}{\partial \textbf{x}_0}$. To this end, define the Malliavin matrix $D_s\textbf{x}_t:=[D_s^1\textbf{x}_t,\ldots,D_s^d\textbf{x}_t]$. Then we have~\citep{nualart2018introduction}:
\begin{align}\label{Eq:MatrixMalliavin}
	D_s\textbf{x}_t = \boldsymbol{\Gamma}_{s,t} \: \textbf{B}(\textbf{x}_s, s),
\end{align}
where $\boldsymbol{\Gamma}_{s,t} := \textbf{Y}_t \textbf{Z}_s$ is the $s$-to-$t$ stochastic flow, with the flow matrix $\textbf{Y}_t = \frac{\partial \textbf{x}_t}{\partial \textbf{x}_0} \in \mathbb{R}^{n\times n}$ and its inverse $\textbf{Z}_t = \textbf{Y}_t^{-1}$ 
satisfying the following variational SDEs:
\begingroup\makeatletter\def\f@size{9.5}\check@mathfonts
\begin{align}\label{Eq:Y-Z}
	\textbf{Y}_t =& \textbf{I}+\sum_{l=1}^d \int_0^t \mathcal{J}_\textbf{x} \textbf{b}_l\left(\textbf{x}_s,s\right) \textbf{Y}_s d w_s^l
	+\int_0^t \mathcal{J}_\textbf{x} \textbf{a}\left(\textbf{x}_s,s\right) \textbf{Y}_s d s \notag\\
	\textbf{Z}_t =& \textbf{I}-\sum_{l=1}^d \int_0^t\! \textbf{Z}_s\bigg( \mathcal{J}_\textbf{x} \textbf{b}_{l}\left(\textbf{x}_s,s\right) d w_s^l\notag\\
	&\hspace{4pt}-\int_0^t \textbf{Z}_s\!\left(\mathcal{J}_\textbf{x} \textbf{a}\left(\textbf{x}_s,s\right)\right.
	\!-\!\sum_{l=1}^d\! \mathcal{J}_\textbf{x} \textbf{b}_{l}\left(\textbf{x}_s,s\right)^2 \bigg) d s.
\end{align}
\endgroup

\section{%
Time-Varying Stochastic Optimization}
\label{sec:problem-statement}

Throughout, we consider a filtered probability space $(\Omega, \mathcal{F}, \mathcal{F}_t, \mathbb{P})$,
where $\Omega$ is the sample space,
$\mathcal{F}$ is a $\sigma$-algebra of events on $\Omega$,
and $\mathbb{P} : \mathcal{F} \to [0,1]$ is a probability measure. %
On this space, we consider a $d$-dimensional Wiener process $\mathbf{W} = (\mathbf{w}_t)_{t \geq 0}$
\textcolor{black}{with diffusion matrix $\textbf{I}$}.
The filtration $(\mathcal{F}_t)_{t \geq 0}$ is taken to be the natural filtration of $\mathbf{W}$, namely $\mathcal{F}_t = \sigma((\textbf{w}_s)_{s\in[0,t]})$, which encodes all the information up to time $t$.

We next focus on time-varying stochastic optimization (TV-SO); \cref{App:FSO} studies another class, namely functional stochastic optimization. In TV-SO, the decision variable is a Wiener function and the objective is a Wiener functional. We therefore need stochastic variational tools based on Malliavin calculus to analyze the optimality conditions of such problems. This, in turn, enables us to characterize the optimal decision by perturbing the solution along Wiener-driven paths.

Specifically, we consider a time-varying stochastic optimization (TV-SO) problem of the form
\begin{align}\label{Eq:TV-SO}
	\mbox{TV-SO}: \qquad	&\min_{ \textbf{X} }  \!\!
	\underset{
		\begin{tiny} 
			\begin{array}{c}
				(\boldsymbol{\Xi},\textbf{X})\!\sim\!\mathbb{P}\!\otimes\!\mathbb{Q}
			\end{array} 
		\end{tiny} 
	}{\mathbb{E}}\hspace{-5 pt}
	\big\{ f(\textbf{x}_t, \boldsymbol{\xi}_t, t) \big\},
\end{align}
where the optimization is carried out jointly for every $t\in[0,T]$. Note that minimizing the aggregated objective $\int_0^t |\mathbb{E}f(\textbf{x}_t,\boldsymbol{\xi}_t,t)|dt$ does not, in general, yield the desired solution, since the minimizer of $\mathbb{E}f(\textbf{x}_t,\boldsymbol{\xi}_t,t)$ may change sign at different time instances.

We now make the following assumptions:

\noindent \textbf{Assumption I:} $\boldsymbol{\Xi}=(\boldsymbol{\xi}_t \in \mathbb{R}^m)_{t\in[0,T]}$ is a known Malliavin-differentiable process, adapted to the filtration of $\textbf{W}$ under the law $\mathbb{P}$.

\noindent \textbf{Assumption II:} $f:\mathbb{R}^n\times\mathbb{R}^m\times[0,T] \to \mathbb{R}$ is square-integrable and twice continuously differentiable with respect to $\textbf{x}_t$ and $\boldsymbol{\xi}_t$, and represents a time-varying performance measure.

\noindent \textbf{Assumption III:} In contrast to SO problems with a deterministic decision, the decision here is assumed to be a \emph{Malliavin-differentiable} process $\textbf{X}=(\textbf{x}_t \in \mathbb{R}^n)_{t\in[0,T]}$ under law $\mathbb{Q}$.

Note that, for optimization purposes, $\mathbb{Q}$ should be induced by the same Wiener process from which $\boldsymbol{\Xi}$ is derived. Consequently, $\textbf{x}_t$ is viewed as a \emph{Wiener-driven function}. Moreover, Malliavin-differentiable functions form a broad class of processes (including deterministic ones) and do not impose restrictive assumptions on Wiener processes in practice.

\fakeparagraph{Illustrative Example}
As an illustrative example, consider the SO problem 
$ \min_{x(\cdot)}~ \mathbb{E}_{\eta \sim \mathcal{N}(0,1)}\big\{ x(\eta)^4 \: \eta^2 + 4x(\eta) \: \eta \big\}$. If $x$ is constrained to be deterministic, the optimal solution is $x^* = 0$ with objective value $0$. Allowing $x$ to take the linear form $x(\eta)=a+b\eta$ yields the optimal solution $ x^*(\eta) = -\eta/\sqrt[3]{5} $ with objective value $-1.216$, which is smaller than in the deterministic case. This example also shows that the condition $\mathbb{E}\{ 4x(\eta)^3 \eta^2 + 4 \eta \} = 0$ cannot uniquely determine the optimal solution, since it admits infinitely many solutions $\{ a/\eta^{1/3}, a\eta, a\eta^{5/3},... \}$ for a constant $a$.

\begin{table}[b]
	\centering
	\small\setlength{\tabcolsep}{5pt}
	\caption{Comparison of optimality conditions of TV-O~(\ref{eq:TVO}) and TV-SO~(\ref{Eq:TV-SO}) problems.}
	\begin{tabular}{c l}
		\toprule
		Problem & \hspace{43 pt} Condition \\
		\cmidrule{1-1}\cmidrule{2-2}
		TV-O  	& \hspace{13 pt}$\nabla_\textbf{x}^2f \: \textcolor{darkblue}{\frac{d\textbf{x}_t}{dt}} 
		\hspace{7pt} + \textcolor{darkgreen}{\frac{\partial}{\partial t}} \hspace{3pt} \nabla_\textbf{x}f \hspace{23 pt} = \textbf{0}$ \\
		TV-SO 	& $\mathbb{E}\left\{ \nabla_\textbf{x}^2f \: \textcolor{darkblue}{D_s \textbf{x}_t}   
		+ \textcolor{darkgreen}{\nabla_{\boldsymbol{\xi}}}\nabla_\textbf{x}f \: \textcolor{darkred}{D_s\boldsymbol{\xi}} \right\}= \textbf{0}$ \\
		\bottomrule
	\end{tabular}
	\label{Eq:optimality-comparison}
\end{table}

As the law of $\textbf{X}$ is induced by the Wiener process $\textbf{W}$, the objective of $\mbox{TV-SO}$ can be reformulated as the following (Riemann) integral: 
\begin{align}
	\label{Eq:Riemann-integral}
	\int f(\textbf{x}_t(\textbf{W}),\boldsymbol{\xi}_t(\textbf{W}), t) \mathbb{P}(\textbf{W}) d\textbf{W}. 
\end{align}
Note that $\textbf{x}_t$ should be viewed as a \emph{Wiener function} rather than a simple variable, and the objective as a \emph{Wiener functional}. 
Hence, the expression in~\cref{Eq:Riemann-integral} does not explicitly 
depend on $\textbf{x}_t$, 
so 
$\nabla_\textbf{x} \int f(\textbf{x}_t(\textbf{W}),\xi_t(\textbf{W}),t) \mathbb{P}(\textbf{W})d\textbf{W}$ cannot be used
to obtain the optimal solution.
\textcolor{black}{In addition, as the illustrative example above shows, $\int \nabla_\textbf{x}f(\textbf{x}_t(\textbf{W}),\xi_t(\textbf{W}),t) \mathbb{P}(\textbf{W})d\textbf{W}$ 
does not necessarily lead to an optimum.}
This calls for stochastic variational analysis to characterize the optimality conditions. Moreover, we aim to extend the solution mechanism of the TV-O problem in~\cref{sec:tvo} to its stochastic counterpart in~\cref{Eq:TV-SO}. Accordingly, we develop a stochastic perturbation framework based on Malliavin calculus, which provides \emph{non-parametric} optimality conditions for optimizing $\mbox{TV-SO}$ at a given $t$.

The following theorem provides the necessary optimality conditions for TV-SO~(\ref{Eq:TV-SO}).
\begin{tcolorbox}[graybox]
	\begin{theorem}[Necessary optimality conditions]%
		\label{Thm:1}
		Consider problem $\mbox{TV-SO}$ under Assumptions I–III, with $\textbf{X}=(\textbf{x}_t)_{t\in[0,T]}$ and $\boldsymbol{\Xi}=(\boldsymbol{\xi}_t)_{t\in[0,T]}$ being $\mathcal{F}$-adapted, Malliavin-differentiable, and square-integrable processes. Then $\mbox{TV-SO}$ is optimized at a given instant $t$ if the following %
		necessary conditions hold: 
		\begin{align}\label{Eq:Conditions_TVSO}
			&\mathbb{E}\big\{ \nabla_\textbf{x}^2 f(\textbf{x}_t, \boldsymbol{\xi}_t, t) \: D_s^i \textbf{x}_t 
			+   \nabla_{\textbf{x}\boldsymbol{\xi}}f(\textbf{x}_t, \boldsymbol{\xi}_t, t)\: D_s^i \boldsymbol{\xi}_t \big\} = \textbf{0}, \notag \\
			&\mathbb{E}\big\{ \nabla_{\textbf{x}}f(\textbf{x}_t, \boldsymbol{\xi}_t, t) \big\} = \textbf{0},
		\end{align}
		for $s\!\in\![0,t]$, $i \in\{1,\ldots,d\}$, where $D_s^i \textbf{x}_t \!\in\! \mathbb{R}^n$ (respectively, $D_s^i \boldsymbol{\xi}_t \!\in\! \mathbb{R}^m$) is the Malliavin derivative of $\textbf{x}_t$ (respectively, $\boldsymbol{\xi}_t$) with respect to $w_t^i$ (the $i$-th component of $\textbf{w}_t$).
	\end{theorem}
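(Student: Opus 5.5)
The argument is a first-order variational analysis in which the decision $\textbf{x}_t$ is perturbed along Wiener-driven directions, followed by the Malliavin integration-by-parts formula to turn the stationarity conditions into pointwise-in-$s$ statements. Fix $t$ and write $F(\textbf{x}_t):=\mathbb{E}\{f(\textbf{x}_t,\boldsymbol{\xi}_t,t)\}$. Consider admissible perturbations $\textbf{x}_t^\epsilon=\textbf{x}_t+\epsilon\,\boldsymbol{\eta}_t$, where $\boldsymbol{\eta}_t$ is any $\mathcal{F}_t$-measurable, square-integrable, Malliavin-differentiable $\mathbb{R}^n$-valued random variable. Optimality at $t$ requires $\frac{d}{d\epsilon}F(\textbf{x}^\epsilon_t)\big|_{\epsilon=0}=0$ for all such $\boldsymbol{\eta}_t$. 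By Assumption II and dominated convergence we may differentiate under the expectation, which gives
\begin{align*}
\mathbb{E}\big\{\nabla_\textbf{x} f(\textbf{x}_t,\boldsymbol{\xi}_t,t)^\top \boldsymbol{\eta}_t\big\}=0 .
\end{align*}

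\textbf{Step 1: deterministic directions.} Taking $\boldsymbol{\eta}_t=\textbf{e}_j$ constant yields $\mathbb{E}\{\nabla_\textbf{x} f\}=\textbf{0}$, which is the second condition.

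\textbf{Step 2: Wiener-driven directions.} Next take $\boldsymbol{\eta}_t=\textbf{e}_j\int_0^t u_s\,dw^i_s$, where $u$ is an arbitrary adapted, square-integrable scalar process; in particular it may be deterministic. Applying the integration-by-parts formula of \cref{sec:malliavin-derivative} to the functional $G:=\partial_{x_j} f(\textbf{x}_t,\boldsymbol{\xi}_t,t)$ gives
\begin{align*}
0=\mathbb{E}\Big\{G\int_0^t u_s\,dw_s^i\Big\}=\mathbb{E}\Big\{\int_0^t D^i_s G\,u_s\,ds\Big\}.
\end{align*}
By the chain rule of Malliavin calculus, $D_s^iG$ equals the $j$-th row of $\nabla_\textbf{x}^2 f\,D^i_s\textbf{x}_t+\nabla_{\textbf{x}\boldsymbol{\xi}} f\,D^i_s\boldsymbol{\xi}_t$. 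This step uses that $\textbf{x}_t$ and $\boldsymbol{\xi}_t$ are both functionals of the same $\textbf{W}$ (Assumptions I and III), and that $f$ is $C^2$. Choosing $u$ deterministic and applying Fubini gives $\int_0^t u_s\,\mathbb{E}\{D^i_sG\}\,ds=0$ for every $u\in L^2[0,t]$. Hence $\mathbb{E}\{D_s^iG\}=0$ for a.e.\ $s\in[0,t]$. Collecting over $j$ yields the first condition, for each $i\in\{1,\ldots,d\}$.

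\textbf{Main obstacle.} The delicate part is justifying Step 2 rigorously. It requires that $G$ lies in the domain of $D$, so that the chain rule applies; this needs bounded or suitably integrable second derivatives of $f$, which I would state as part of Assumption II. It also requires that $\mathbb{E}\{|D^i_sG|\}$ be integrable in $s$, so that Fubini applies. The conclusion holds only for almost every $s$, and I would upgrade it to every $s$ by invoking continuity of $s\mapsto\mathbb{E}\{D_s^iG\}$ where it holds, for instance via the representation $D_s\textbf{x}_t=\boldsymbol{\Gamma}_{s,t}\textbf{B}(\textbf{x}_s,s)$ of \cref{Eq:MatrixMalliavin} when $\textbf{x}$ solves an SDE. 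Finally, note that differentiating under the expectation only needs local integrability of $\nabla_\textbf{x}f$ along the perturbation, which I would secure by a localization argument.
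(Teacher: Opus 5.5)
Your proposal is correct and takes essentially the same route as the paper. Both perturb $\textbf{x}_t$ along directions $\textbf{k}_t+\sum_l\int_0^t\textbf{g}^l_s\,dw^l_s$ with deterministic $\textbf{k},\textbf{g}^l$, take the G\^ateaux derivative, apply Malliavin integration by parts, and conclude via the fundamental lemma of the calculus of variations. The only difference is in the integration-by-parts step: you apply the standard duality formula and the chain rule to $G=\partial_{x_j}f(\textbf{x}_t,\boldsymbol{\xi}_t,t)$, which handles the $\textbf{x}$- and $\boldsymbol{\xi}$-dependence at once, whereas the paper uses its Stein's-lemma-based Theorem~\ref{Thm:app1} together with a separate Stein identity for the $\boldsymbol{\xi}$ term.
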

\end{tcolorbox}
\begin{proof}
	See~\cref{App:proof1}.
\end{proof}

\begin{remark}
	Note that~\cref{Eq:Conditions_TVSO} introduces an additional condition required to ensure the optimality of the stochastic paths generated by $\textbf{X}$. This Malliavin-based term can also be written as 
	$$
	\mathbb{E}\big\{ D_s^i \nabla_x f(\textbf{x}_t,\boldsymbol{\xi}_t,t) \big\} = \textbf{0},\qquad i \in \{1,...,d\},
	$$
	which is the direct stochastic analogue of the deterministic stationarity condition $\frac{d }{dt} \nabla\textbf{x}f(\textbf{x}_t,t) = \textbf{0}$ from~\cref{eq:dynamical-system}, with the time derivative replaced by the Malliavin derivative. This analogy can also be seen by moving from~\cref{eq:dynamical-system} to~\cref{Eq:Conditions_TVSO} via %
	$\frac{d\textbf{x}_t}{dt} \to D_s\textbf{x}_t$ and $\frac{\partial}{\partial t}\nabla_{\textbf{x}}f \to \nabla_{\textbf{x}\boldsymbol{\xi}}f$ (see \cref{Eq:optimality-comparison}).
\end{remark}

As in Problem TV-O, we aim to optimize TV-SO jointly for every $t \in [0,T]$. This leads to a sequence of optimization problems over the entire horizon and requires the optimality conditions in~\cref{Eq:Conditions_TVSO} to hold for all $t \in [0,T]$. Accordingly, the stochastic trajectories should satisfy~\cref{Eq:Conditions_TVSO} at each time $t$, which yields a framework of \emph{stochastic path following} as the stochastic counterpart of the path-following method described in~\cref{sec:tvo}.

\begin{remark}
	A decision process $\textbf{X} = (\textbf{x}_t)_{t\in[0,T]}$ can be jointly optimized in $\mbox{TV-SO}$ for every $t\in[0,T]$ by enforcing the following energy-functional criterion:
	\begingroup\makeatletter\def\f@size{8.5}\check@mathfonts
	\begin{align}\label{Eq:crit2}
		&\hspace{4em}\mathcal{L}_{\rm TV}(\textbf{x},D\textbf{x})  = 
		\int_0^T \left(\Big\|\mathbb{E}\big\{ \nabla_{\textbf{x}}f(\textbf{x}_t, \boldsymbol{\xi}_t, t) \big\} \Big\|^2  \right.
		\\
		&\left.+
		\sum_{i=1}^d \int_0^t \! \Big\| \mathbb{E}\big\{ \nabla_\textbf{x}^2 f(\textbf{x}_t, \boldsymbol{\xi}_t, t) \: D_s^i \textbf{x}_t
		\!+\!   \nabla_{\textbf{x}\boldsymbol{\xi}}f(\textbf{x}_t, \boldsymbol{\xi}_t, t)\: D_s^i \boldsymbol{\xi}_t \big\} \Big\|^2 \!\! ds \! \right)\! dt \notag
	\end{align}
	\endgroup
	Notice that this criterion is the stochastic counterpart of the deterministic case in~\cref{eq:non-parametric-loss}.
\end{remark}

\section{Diffusion Process as a Candidate Decision}

Recall from~\cref{eq:dynamical-system} that the solution of a time-varying optimization problem can be obtained via an ODE. Accordingly, we consider an SDE to solve Problem TV-SO in~\cref{Eq:TV-SO}. For this purpose, we introduce the following diffusion process based on~\cref{Eq:nonparametricSDE}: %
\begingroup\makeatletter\def\f@size{8.5}\check@mathfonts
\begin{align}\label{Eq:diffusion}
	{\rm Diff}^{\rm \textbf{a}, \textbf{B}} \!=\!
	\Big\{
	\textbf{X}=\!(\textbf{x}_s)_{s\in[0,T]}\!:~
	d\textbf{x}_t = \textbf{a}(\textbf{x}_t, t) dt + \sum_{l=1}^d \textbf{b}_l(\textbf{x}_t, t) dw^l_t
	\Big\},
\end{align}
\endgroup
with given $\textbf{x}_0$. 
We impose standard bounded-derivative conditions on $\textbf{a}$ and $\{\textbf{b}_l\}_{l=1}^d$ to guarantee the existence of a unique solution to the process~\citep{Li2020SDE}. Hence, finding the probability law $\mathbb{Q}$ in Problem $\mbox{TV-SO}$ [i.e.,~\cref{Eq:TV-SO}] is equivalent to finding a diffusion process with optimal drift $\textbf{a}$ and diffusion functions $\textbf{B}=[\textbf{b}_1,\ldots,\textbf{b}_d]$. Moreover, the Malliavin derivatives $D_s\textbf{x}_t^i$ satisfy the variational SDE in~\cref{Eq:MalliavinDerv0}. Therefore, evaluating the optimality conditions in~\cref{Eq:crit2} depends mainly on the Malliavin derivative of the chosen diffusion process, which motivates us to develop a method for computing $D_s\textbf{x}_t$.

\subsection{\scalebox{0.95}{Computation of Malliavin Derivatives}}

We next explain how to compute $D_s\textbf{x}_t$. For this purpose, we use the following proposition.

\begin{tcolorbox}[graybox]
\begin{proposition}
    \label{Propos:BackwardSDE}
    Consider the Malliavin derivative $D_r \textbf{x}_t=[D_r^1 \textbf{x}_t,...,D_r^d \textbf{x}_t]$ in~\cref{Eq:MalliavinDerv0} for the SDE in~\cref{Eq:diffusion}. Then
    \begin{align}\label{Eq:vector_malliavin}
        D_r \textbf{x}_t = \boldsymbol{\Gamma}_{r,t} \:\textbf{B}(\textbf{x}_r,r),\qquad\text{for }r\leq t,
    \end{align}
    where $r$-to-$t$ stochastic flow $\boldsymbol{\Gamma}_{r,t} \in \mathbb{R}^{n\times n}$ 
    represents a linear SDE, adapted to the filtration $\mathcal{F}_{r}$, given by
    \begingroup\makeatletter\def\f@size{9.25}\check@mathfonts
    \begin{align}\label{Eq:BSDE_gamma}
        &%
        d \boldsymbol{\Gamma}_{r,t} = \Bigg( \sum_{l=1}^d \mathcal{J}_{\textbf{x}}\textbf{b}_l(\textbf{x}_t,t)\: dw_t^l 
        + \mathcal{J}_{\textbf{x}}\textbf{a}(\textbf{x}_t,t) dt  \Bigg)\boldsymbol{\Gamma}_{r,t},
    \end{align}
    \endgroup
    with initial condition $\boldsymbol{\Gamma}_{r,r}=\textbf{I}$.
\end{proposition}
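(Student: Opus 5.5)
The plan is to fix the column index $i$ and view \cref{Eq:MalliavinDerv0} as a linear SDE in the forward variable $t\ge r$ for the unknown $\textbf{v}_t := D_r^i\textbf{x}_t$. Its initial value is $\textbf{v}_r=\textbf{b}_i(\textbf{x}_r,r)$, and its random coefficients are $\mathcal{J}_{\textbf{x}}\textbf{a}(\textbf{x}_t,t)$ and $\mathcal{J}_{\textbf{x}}\textbf{b}_l(\textbf{x}_t,t)$. (The first term of \cref{Eq:MalliavinDerv0} is really evaluated at the lower endpoint $s=r$; this is the standard form in \citep{nualart2018introduction}.) Stacking the $d$ columns gives the matrix equation
\begin{align*}
D_r\textbf{x}_t = \textbf{B}(\textbf{x}_r,r) + \int_r^t \Big(\sum_{l=1}^d \mathcal{J}_{\textbf{x}}\textbf{b}_l(\textbf{x}_\tau,\tau)\,dw^l_\tau + \mathcal{J}_{\textbf{x}}\textbf{a}(\textbf{x}_\tau,\tau)\,d\tau\Big) D_r\textbf{x}_\tau .
\end{align*}
This is the same linear equation as \cref{Eq:BSDE_gamma}, only with a different initial condition.

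Next I define $\boldsymbol{\Gamma}_{r,t}$ as the solution of \cref{Eq:BSDE_gamma} with $\boldsymbol{\Gamma}_{r,r}=\textbf{I}$. The coefficients are bounded, since $\textbf{a}$ and $\textbf{b}_l$ have bounded derivatives, and progressively measurable. Standard Lipschitz existence and uniqueness for linear SDEs therefore gives a unique square-integrable solution, adapted to the filtration generated by the increments of $\textbf{w}$ after $r$, together with the $\mathcal{F}_r$-measurable starting data. I then set $\textbf{U}_t := \boldsymbol{\Gamma}_{r,t}\textbf{B}(\textbf{x}_r,r)$. Because $\textbf{B}(\textbf{x}_r,r)$ is $\mathcal{F}_r$-measurable and constant in $t\ge r$, it can be pulled inside the stochastic integrals on $[r,t]$, so $\textbf{U}$ satisfies the stacked equation with $\textbf{U}_r=\textbf{B}(\textbf{x}_r,r)$. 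Pathwise uniqueness for the stacked linear SDE (Gronwall plus the Itô isometry on $\mathbb{E}\|\textbf{U}_t-D_r\textbf{x}_t\|^2$) then yields \cref{Eq:vector_malliavin}.

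As a consistency check I will identify $\boldsymbol{\Gamma}_{r,t}=\textbf{Y}_t\textbf{Z}_r$, matching \cref{Eq:MatrixMalliavin}. By \cref{Eq:Y-Z}, $\textbf{Y}_t$ solves the same linear SDE from $\textbf{I}$ at time $0$, so $\textbf{Y}_t\textbf{Z}_r$ solves it from $\textbf{Y}_r\textbf{Z}_r=\textbf{I}$ at time $r$. Uniqueness then gives the identity, provided $\textbf{Z}_r=\textbf{Y}_r^{-1}$, which I take from \cref{Eq:Y-Z}.

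I expect the main obstacle to be the measurability and well-posedness bookkeeping rather than any algebra. The Itô integrals on $[r,t]$ have an $\mathcal{F}_r$-measurable initial factor, and the statement's phrase ``adapted to $\mathcal{F}_r$'' has to be read correctly: as a process in $t$, $\boldsymbol{\Gamma}_{r,t}$ is adapted to $\mathcal{F}_t$. To handle this I will multiply by the $\mathcal{F}_r$-measurable matrix only after solving the equation, and I will rely on the bounded Jacobians to justify the moment bounds needed for uniqueness.
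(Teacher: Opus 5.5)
Your proposal is correct, but it runs the argument in the opposite direction from the paper. The paper takes the product formula $D_r\textbf{x}_t=\textbf{Y}_t\textbf{Z}_r\textbf{B}(\textbf{x}_r,r)$ of \cref{Eq:MatrixMalliavin} as given from the literature and sets $\boldsymbol{\Gamma}_{r,t}:=\textbf{Y}_t\textbf{Z}_r$. It then obtains \cref{Eq:BSDE_gamma} in one line by freezing the $\mathcal{F}_r$-measurable factor $\textbf{Z}_r$ and applying the $\textbf{Y}$-equation of \cref{Eq:Y-Z}, with $\boldsymbol{\Gamma}_{r,r}=\textbf{Y}_r\textbf{Z}_r=\textbf{I}$. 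That computation is exactly your ``consistency check.''

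You instead define $\boldsymbol{\Gamma}_{r,\cdot}$ through the linear SDE started from $\textbf{I}$ at time $r$. You then derive the representation from the variational equation \cref{Eq:MalliavinDerv0} by pathwise uniqueness. In effect you reprove \cref{Eq:MatrixMalliavin} rather than cite it, and your main claim never uses the invertibility of $\textbf{Y}$ or the $\textbf{Z}$-equation.

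Your route also makes explicit two points the paper glosses over. First, the free term in \cref{Eq:MalliavinDerv0} must be $\textbf{b}_i(\textbf{x}_r,r)$ rather than $\textbf{b}_i(\textbf{x}_t)$. Second, uniqueness of the linear SDE is what justifies identifying the process simulated in \cref{Alg:backward-malliavin} (started from $\textbf{I}$ at time $r$) with $\textbf{Y}_t\textbf{Z}_r$, and hence with the Malliavin derivative. The paper only shows that $\textbf{Y}_t\textbf{Z}_r$ satisfies the equation. Your reading of ``adapted to $\mathcal{F}_r$'' as $\mathcal{F}_t$-adaptedness in $t$ is also the right one.

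In return, the paper's factorization buys something computational. Every flow $\boldsymbol{\Gamma}_{r,t}$, for all starting times $r$, comes from a single pair $(\textbf{Y},\textbf{Z})$. Your characterization on its own suggests solving a fresh matrix SDE for each $r$.
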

\end{tcolorbox}
\begin{proof}
    See~\cref{App2}.
\end{proof}

Combined with an SDE simulator, Proposition~\ref{Propos:BackwardSDE} enables the recursive computation of the Malliavin derivative $D_s\textbf{x}_t$. 
Starting from $\boldsymbol{\Gamma}_{r,r}= \textbf{I}$, we can increment $t$ to evaluate the right-hand side of~\cref{Eq:BSDE_gamma} 
for $t \in [0,T]$. 
In particular, we can use the Euler--Maruyama scheme, which has a convergence rate of \({O}(\sqrt{\Delta r})\)~\citep{kloeden1992numerical},
to simulate $\boldsymbol{\Gamma}_{r,t}$ and $D_r\textbf{x}_t$ for $r\in(0,t]$ given $t$.
\Cref{Alg:backward-malliavin} provides the corresponding pseudo-code based on the Euler--Maruyama scheme.

\begin{algorithm}[t]
	\small
	\caption{Computation of the Malliavin derivative via the Euler--Maruyama scheme}
	\label{Alg:backward-malliavin}
	\DontPrintSemicolon
	\KwData{Forward trajectory $\{\textbf{x}_r, \textbf{w}_r\}_{r=0:\Delta r:t}$, time‐step $\Delta r$.}
	\textcolor{darkblue}{\tcp{\small Initialize via terminal condition:}}
	$\boldsymbol{\Gamma}_{r,r} \gets \textbf{I}$\;
	\textcolor{darkblue}{\tcp{ Solve the SDE in~\cref{Eq:BSDE_gamma}:}}
	\For{$t \gets 0$ \KwTo r with step $\Delta t$}{
		Compute\vspace{-6pt} %
		\scalebox{0.9}{$
			\displaystyle \Delta \boldsymbol{\Gamma}_{r,t}
			\;=\;\,\Bigl( \mathcal{J}_{\textbf{x}} \textbf{a}(\textbf{x}_r,r) \Delta r
			+\sum_{l=1}^d \mathcal{J}_{\textbf{x}} \textbf{b}_l(\textbf{x}_r,r)\bigl(\textbf{w}_{r}-\textbf{w}_{r-\Delta r}\bigr)\Bigr) \boldsymbol{\Gamma}_{r,t}
			$}\vspace{-10pt}\;
		Update 
		$\boldsymbol{\Gamma}_{\,r\,,\,t+\Delta t}
		\;\gets\;\boldsymbol{\Gamma}_{r,t} + \Delta \boldsymbol{\Gamma}_{r,t}$\;
	}
	\textcolor{darkblue}{\tcp{ Recover Malliavin derivatives:}}
	\For{$r\in\{0,\Delta r,2\Delta r,\dots,t\}$}
	{
		$D_r \textbf{x}_t \;\gets\;\boldsymbol{\Gamma}_{r,t}\;\textbf{B}\bigl(\textbf{x}_r,r\bigr)$\;
	}
	\textbf{return} $(D_r\textbf{x}_t)_{r=0:\Delta r:t}$
	\small
\end{algorithm}

\section%
{The Stochastic Path Follower Algorithm}
This section presents an adaptable deep-learning algorithm for TV-SO problems based on the variational framework of~\cref{Thm:1}. As the algorithm follows stochastic optimal paths for $\mbox{TV-SO}$, we refer to it as the Stochastic Path Follower (SPF). To this end, we design diffusion processes that minimize the criterion in~\cref{Eq:crit2}. Although the framework developed in~\cref{sec:problem-statement} is independent of any specific parameterization technique, we adopt NN parameterization due to its demonstrated effectiveness in learning SDEs~\citep{Kidger2021GAN,Li2020SDE}. Specifically, we consider $\boldsymbol{\theta}$-parametric NNs for the drift and diffusion functions in~\cref{Eq:diffusion}. In detail, we set
\begingroup\makeatletter\def\f@size{8.5}\check@mathfonts
\begin{align}\label{Eq:parametricSDE}
    \vspace{-5 pt}
    {\rm SDE}( \textbf{a},\textbf{B},\textbf{x}_0,\boldsymbol{\theta}):~ d\textbf{x}_t = \textbf{a}(\textbf{x}_t, t, \boldsymbol{\theta}) dt 
    + \sum_{l=1}^d\textbf{b}_l(\textbf{x}_t, t, \boldsymbol{\theta}) d\textbf{w}_t^l,
\end{align}
\endgroup
with given $\textbf{x}_0$, where $\textbf{a}: \mathbb{R}^n\times[0,T]\times\Theta \to \mathbb{R}^n $ and $\textbf{B}=[\textbf{b}_1,\ldots,\textbf{b}_d]:\mathbb{R}^n\times[0,T]\times\Theta \to \mathbb{R}^{n\times d}$ are two $\boldsymbol{\theta}$-parametric NNs, and $\Theta \subset \mathbb{R}^{\phi}$ is the corresponding parameter set. We then apply the SPF algorithm to solve $\mbox{TV-SO}$ problems.

SPF consists of the following three main phases. %

\fakeparagraph{Forward pass}
Simulate the SDE in~\cref{Eq:parametricSDE} to obtain the stochastic paths of $(\textbf{x}_t)_{t\in[0,T]}$. In addition, compute the Malliavin derivatives $D_s\textbf{x}_t$ using Proposition~\ref{Propos:BackwardSDE}.

\fakeparagraph{Loss evaluation}
Compute the expected loss associated with Problems $\mbox{TV-SO}$, as given in~\cref{Eq:crit2}.

\fakeparagraph{Parameter update}
Update the parameters of the drift and diffusion functions using an Adam-type optimizer.

Pseudocode %
for the SPF algorithm is provided in~\cref{Alg:stochastic-path-following}, and to guarantee the convergence of the SPF algorithm, we rely on the results in~\citep{Adamconvergence2022}. In particular, we follow Algorithm 2 and the setup of Theorem 17 in that work. Moreover, to achieve an $\epsilon$-stationary point, we choose the batch size as $p = 1/(\epsilon \: \phi)$, where $\phi$ denotes the NN parameterization dimension (see~\cref{Sec:Eval} for the specific values used in our experiments). %

\begin{algorithm}[t]
	\small
	\caption{{Stochastic Path Follower (SPF)}}
	\label{Alg:stochastic-path-following}
	\textbf{Goal:} Solving stochastic optimization problem $\mbox{TV-SO}$ .\\ %
	\KwIn{
		Neural drift and diffusion functions $\mathbf{a}(\cdot, t, \boldsymbol{\theta}): \mathbb{R}^{n \times p} \times [0,T] \times \Theta \rightarrow \mathbb{R}^{n \times p}$, 
		$\mathbf{B}(\cdot, t, \boldsymbol{\theta}): \mathbb{R}^{n \times p} \times [0,T] \times \Theta \rightarrow \mathbb{R}^{n \times p \times d}$,
		initial process $\mathbf{x}_0 \in \mathbb{R}^{n \times p}$,
		number of sample paths $P$,
		number of episodes $N_{\text{eps}}$.
	}
	\KwOut{\scalebox{1.0}[1.0]{Diffusion process $\textbf{X}$ with optimal solution $\boldsymbol{\theta}$ of $\mbox{TV-SO}$}.\\}
	
	\For{$\text{E} = 1$ \KwTo $N_{\text{eps}}$}{
		\textcolor{darkblue}{\tcp{ Simulate the neural  SDE to obtain sample paths:}}
		$(\mathbf{x}_t, \mathbf{w}_t) \leftarrow \texttt{SDEsolve}(\mathbf{a}, \mathbf{B}, \mathbf{x}_0, \boldsymbol{\theta})$\;
		
		\textcolor{darkblue}{\tcp{ Compute Malliavin derivative $D_s\mathbf{x}_t$ by simulating the SDE of the s-to-t stochastic flow $\boldsymbol{\Gamma}_{s,t}$}}
		$D_s \mathbf{x}_t \leftarrow \texttt{SDEsolve}(\mathcal{J}_{\textbf{x}}\mathbf{a}, \mathcal{J}_{\textbf{x}}\mathbf{B}, \boldsymbol{\Gamma}_{s,s})$\;
		
		\textcolor{darkblue}{\tcp{ Evaluate the energy-functional loss $\mathcal{L}(\theta;\textbf{x},D\textbf{x})$}}
		$\mathcal{L}(\boldsymbol{\theta}) \leftarrow \mathcal{L}(\theta;\textbf{x},D\textbf{x})$\;
		
		\textcolor{darkblue}{\tcp{ Update parameters of neural SDE:}}
		$\boldsymbol{\theta} \leftarrow \text{AdamOptimizer}(\boldsymbol{\theta}, \mathcal{L}(\boldsymbol{\theta}))$\;
	}
	\small
\end{algorithm}

\section{Performance Evaluation}
\label{Sec:Eval}

This section evaluates the proposed stochastic variational framework through illustrative experiments and selected applications, namely least-squares recovery and logistic regression. 
\textcolor{black}{The source code of our solution is available at \url{https://github.com/amidimohsen/StochasticPathFollower}.}

We now introduce several algorithms from the literature as state-of-the-art (SOTA) baselines for comparison. As a \emph{learning-based} approach, we adopt the pathwise differentiation method (PDM)~\citep{tzen2019neural,liu2019neural}. PDM can be used for \emph{joint} optimization over the entire time horizon (i.e., optimization for every $t\in [0,T]$), in contrast to adjoint sensitivity approaches~\citep{Li2020SDE}. It yields \emph{parametric} optimality conditions based on a prescribed diffusion process (see~\cref{App:PDM} for more details).

In addition, we consider the proximal stochastic gradient (PSG) method from~\citep{Cutler2023} as a \emph{gradient-based} baseline. This algorithm addresses convex optimization problems under distribution drift and is a stochastic gradient descent method. PSG tracks the optimal decision process using a stochastic algorithm based on iterate averaging.

\subsection{Benchmark: Stochastic Extrapolation}

Here, we consider a few simple yet representative problems to \emph{benchmark} the SPF algorithm against PDM (note that PDM is a baseline approach for optimizing SDE-driven functions and can also be applied to time-varying optimization over an entire time horizon). Specifically, we aim to find optimal stochastic trajectories that optimize expected time-varying objectives. As a practical methodology, we extrapolate known optimal solutions of these objectives at time $t=0$ into the unknown paths at $t>0$ using the optimality condition in~\cref{Thm:1}. We therefore consider objectives of the form $\mathbb{E}f(\textbf{x}_t,\textbf{w}_t,t)$, where $\textbf{x}_t$ is the decision process, $\textbf{w}_t$ is the Wiener process, and $f(\cdot,\cdot,t)$ is a time-varying function. Moreover, we assume that the solution is known at $t=0$. The goal is then to extrapolate this solution from $t=0$ to any $t\in(0,T]$ by following the stochastic paths that optimize the objective of interest. In what follows, we first revisit the illustrative example presented in~\cref{sec:problem-statement}, and then evaluate a constrained TV-SO problem (see~\cref{App:toy-setup} for details of the experimental settings).

\fakeparagraph%
{Scalar TV-SO (Exp1)}
\label{sec:exp1}
\begin{figure*}[t]
\centering
\subcaptionbox{\label{Fig:Ex1}}{%
	\hspace{-4 pt}%
	\includegraphics[width=0.2425\textwidth]{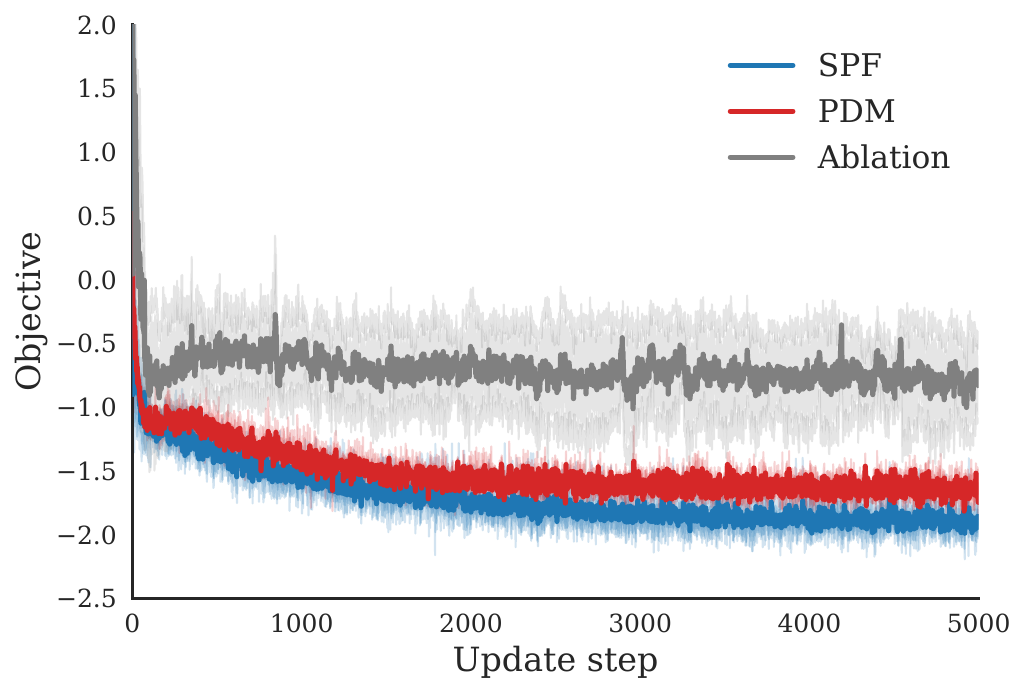}
}
\subcaptionbox{\label{Fig:optimal-values}}{%
	\hspace{-4 pt}%
	\includegraphics[width=0.2425\textwidth]{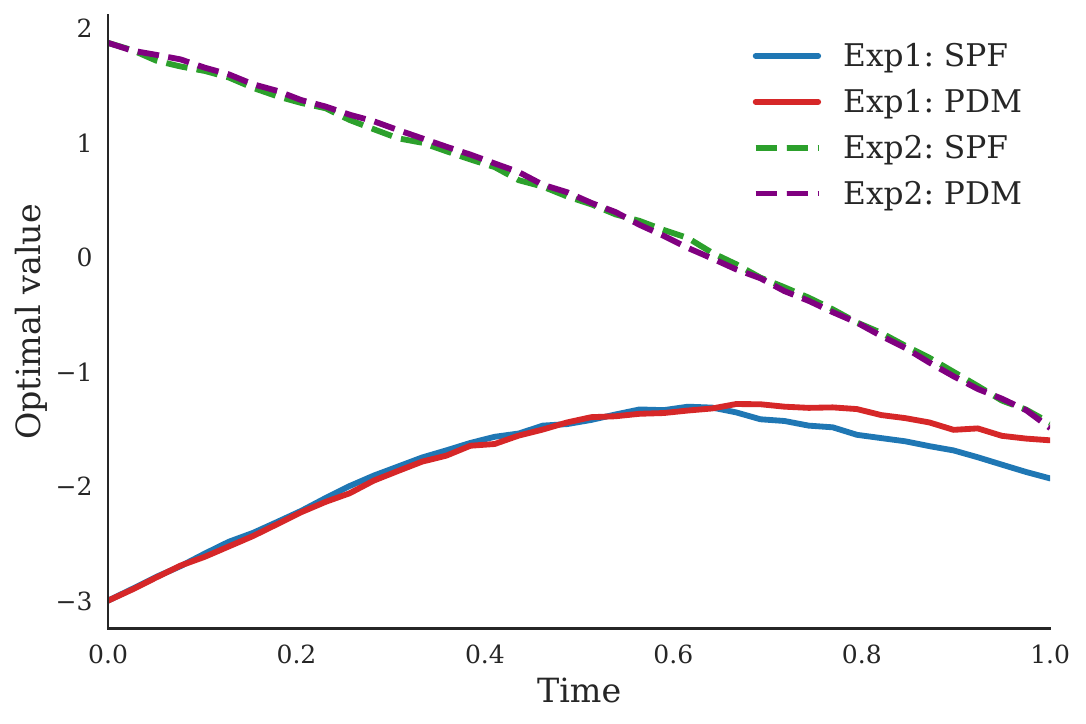}
}
\subcaptionbox{\label{Fig:Ex2-sol1}}{%
	\hspace{-4 pt}%
	\includegraphics[width=0.2425\textwidth]{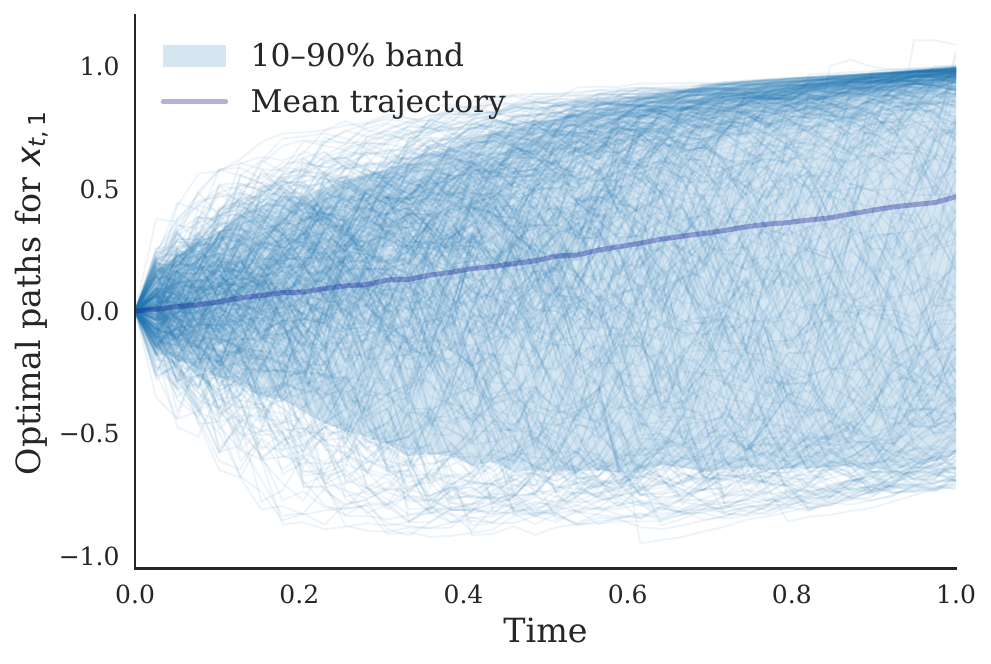}
}
\subcaptionbox{\label{Fig:Ex2-sol2}}{%
	\hspace{-4 pt}%
	\includegraphics[width=0.2425\textwidth]{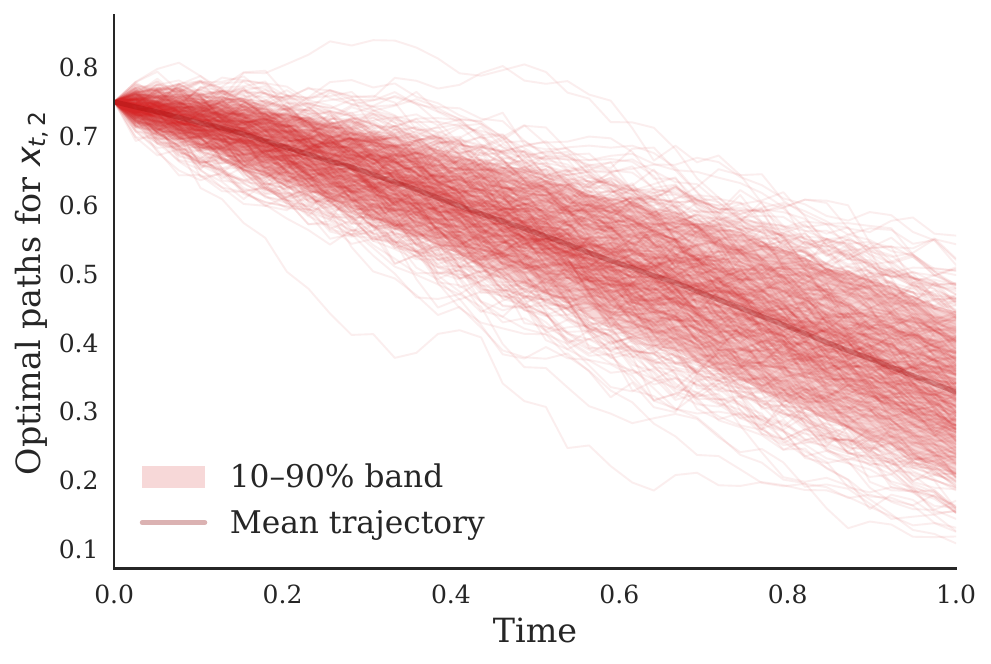}
}
\caption{
	\subref{Fig:Ex1} Learning performance and ablation results for problem Exp1 at $t=1$ for the SPF and PDM methods; the ablation curves are obtained by omitting the Malliavin-based term in~\cref{Eq:crit2}.
	\subref{Fig:optimal-values} Optimal values for each $t\in[0,1]$ in problems Exp1 and Exp2.
	\emph{Non-degenerate} optimal stochastic trajectories obtained by SPF in problem Exp2 for all $t \in [0,1]$ for \subref{Fig:Ex2-sol1} $x_{t,1}$ and \subref{Fig:Ex2-sol2} $x_{t,2}$.
}
\label{Fig:Ex1_val_sols}
\end{figure*}
\begin{table}[t!]
	\centering
	\caption{Average runtime per update step (in seconds).}\label{Table:runtime}
	\small\setlength{\tabcolsep}{5.25pt}
	\begin{tabular}{lrr}
		\toprule
		\textbf{Problem} & \multicolumn{1}{c}{\scalebox{.95}[1.0]{\textbf{SPF (ours)}}} & \multicolumn{1}{c}{\textbf{PDM}} \\
		\cmidrule(lr){1-2} \cmidrule(lr){3-3} 
		Scalar TV-SO (Exp1)
		& \textcolor{darkgreen}{$< 1$} & \textcolor{darkred}{9} \\
		{Constrained TV-SO (Exp2)
		}
		&\textcolor{darkgreen}{$< 1$}  & \textcolor{darkred}{41}  \\
		\bottomrule
	\end{tabular}
	\small
	\vspace{-10 pt}
\end{table}
Consider the following problem:
\begin{align}%
\!\!\!\!\!\!	\min_{{\rm Diff}^{\textbf{a},\textbf{B}}}
\underset{(w,x)\sim\mathbb{P}\otimes\mathbb{Q}}{\mathbb{E}}~x_t^4\:(1-t+ t w_t^2) + 4\:x_t\:(1-t+  t w_t) ,\nonumber
\end{align}
for every $t\in[0,1]$. 
The objective at $t=1$ becomes $\mathbb{E}_{w_1 \sim \mathcal{N}(0,1)}\{ x^4w_1^2 + 4xw_1 \}$, since $w_1 \sim \mathcal{N}(0,1)$, which matches the illustrative example in~\cref{sec:problem-statement}. For $t=0$, the optimal solution is $x_0^* = -1$. We then extrapolate this deterministic optimal solution at $t=0$ into a stochastic path for $t\in (0,1]$ using the stochastic path-following method implemented by SPF. To this end, we consider a diffusion process initialized at $x_0^*$. For comparison, we also apply the parametric PDM algorithm. %

\Cref{Fig:Ex1} reports the learned optimal values at $t=1$ obtained by the SPF and PDM algorithms. The importance of the proposed framework has been established theoretically in~\cref{sec:problem-statement}. Here, we further assess it through an ablation study of SPF. Specifically, \cref{Fig:Ex1} (curve with legend \emph{Ablation}) shows the learning performance when the Malliavin-based term in the optimality criterion in~\cref{Eq:crit2}, namely \scalebox{0.9}{$\mathbb{E}\big\{ \nabla_\textbf{x}^2 f(\textbf{x}_t, \boldsymbol{\xi}_t, t) \: D_s^i \textbf{x}_t 
\!+\!   \nabla_{\textbf{x}\boldsymbol{\xi}}f(\textbf{x}_t, \boldsymbol{\xi}_t, t)\: D_s^i \boldsymbol{\xi}_t \big\}$}, is omitted. In this case, the optimal solution cannot be recovered. \Cref{Fig:optimal-values} (curves with legend \emph{Exp1}) shows the resulting optimal values for all $t\in[0,1]$. SPF not only performs competitively with PDM, but is also computationally more efficient than the conventional PDM approach, as indicated by the average runtime per update step reported in~\cref{Table:runtime}.

\fakeparagraph%
{Constrained TV-SO (Exp2)}
\label{sec:exp2}
We now consider:
\begin{align}%
\!\!\!\!	\min_{{\rm Diff}^{\textbf{a},\textbf{B}}}  \underset{(\textbf{w},\textbf{x})\sim\mathcal{P}\otimes\mathbb{Q}}{\mathbb{E}}
\big\{ \textbf{f}(\textbf{x}_t)^\top \textbf{g}(\textbf{w}_t,t)  \big\}, ~~\mbox{s.t.}~~ \mathbb{E} \{\textbf{1}^\top \textbf{x}_t\}=3/4,\nonumber
\end{align}
optimized $\forall t \in [0,1]$, where \scalebox{0.9}{$\textbf{x}_t=[x_{t,1}, x_{t,2}]$, $\textbf{w}=[w^1,w^2]^\top$} is a vector Wiener process, \scalebox{0.9}{$\textbf{f}(\textbf{x}_t) = [x_{t,1}^4, 4x_{t,1}, 2x_{t,2}^2, x_{t,2}]^\top$,} and \scalebox{0.9}{$\textbf{g}(\textbf{w}_t,t) = [g_0(w_t^1,t), g_0({w_t^1}^2,t), g_0(w_t^2,t), g_0({w_t^2}^2,t)]^\top$} with \scalebox{0.9}{$g_0(w,t) = 1-t+ t w$}.

The problem has optimal solution $(x_{0,1}^*,x_{0,2}^*) = (0,3/4)$ at $t=0$. The goal is to determine the optimal paths for $t>0$ using SPF. We therefore employ a vector SDE initialized at $(x_{0,1}^*,x_{0,2}^*)$ and then apply~\cref{Alg:stochastic-path-following}.

\Cref{Fig:optimal-values} (the curves with legend \emph{Exp2}) shows the obtained optimal values for all $t\in(0,1]$. The SPF solution is consistent with that of PDM, but it exhibits significantly lower computational complexity (see~\cref{Table:runtime}). \Cref{Fig:Ex2-sol1,Fig:Ex2-sol2} illustrate the optimal trajectories of the resulting diffusion process optimized by SPF for $t\in[0,1]$, demonstrating that SPF maintains rich and non-degenerate dynamical behavior.

\subsection{Least-Squares Recovery}
\label{Sec:LS}

\begin{figure*}[t]
	\centering
	\subcaptionbox{\label{subfig:LS-1}}
	{
		\hspace{-15pt}
		\includegraphics[width=0.26\textwidth]{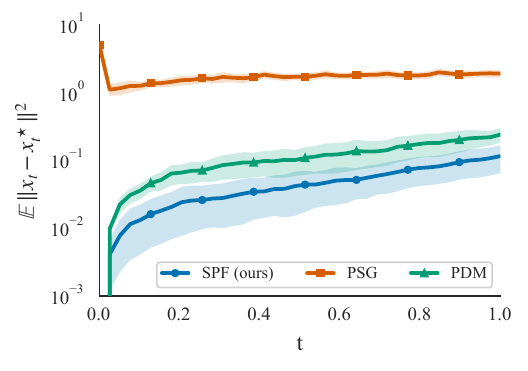}
	}
	\subcaptionbox{\label{subfig:LS-2}}
	{
		\hspace{-15pt}
		\includegraphics[width=0.26\textwidth]{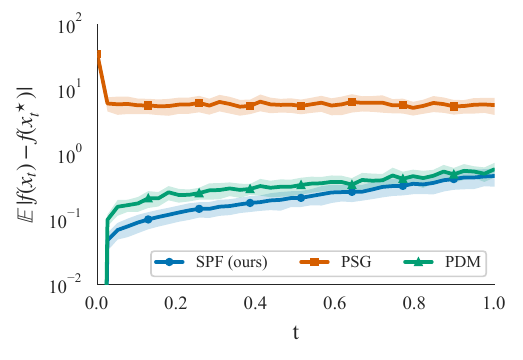}
	}
	\subcaptionbox{\label{subfig:logreg-1}}
	{	
		\hspace{-15pt}
		\includegraphics[width=0.26\textwidth]{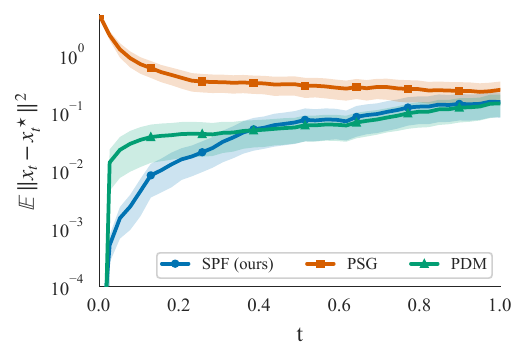}
	}
		\subcaptionbox{\label{subfig:logreg-2}}
	{	
		\hspace{-15pt}
		\includegraphics[width=0.26\textwidth]{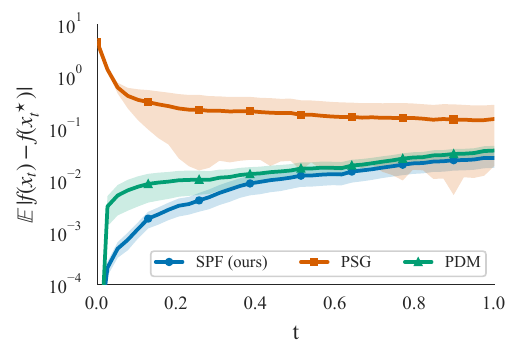}
	}
	\hspace{-15pt}
	\caption{
		\subref{subfig:LS-1} The optimality distance $\mathbb{E}\|\textbf{x}_t-\textbf{x}_t^* \|^2$ 
		and \subref{subfig:LS-2} the objective suboptimality $\mathbb{E}|f(\textbf{x}_t) -  f(\textbf{x}_t^*)|$
		for the considered approaches on the least-squares recovery problem in~\cref{Sec:LS}. 
		\subref{subfig:logreg-1} The optimality distance $\mathbb{E}\|\textbf{x}_t-\textbf{x}_t^* \|^2$ 
		and \subref{subfig:logreg-2} the objective suboptimality $\mathbb{E}|f(\textbf{x}_t) -  f(\textbf{x}_t^*)|$ 
		for the considered approaches on the logistic regression problem in~\cref{Sec:LR}.
	}\label{Fig:LS-LR}
\end{figure*}

Here, we consider a least-squares recovery problem used in~\citep{Cutler2023,Maity2023PredictCorect}. This problem aims to recover the variable $\textbf{x}_t^* \in \mathbb{R}^d$, which follows a non-stationary distribution, from the observed parameter $\textbf{n}_t \in \mathbb{R}^n$ through the following optimization problem:
\begingroup\makeatletter\def\f@size{9.25}\check@mathfonts
\begin{align}
\label{Eq:LS-prob}
\min_{\textbf{X}} \! \underset
	{
	(\textbf{x}_t, \textbf{n}_t \sim \mathcal{P}_t) 
	}
	{\mathbb{E}} 
	\!\left\{ 
	f(\textbf{x}_t, \textbf{n}_t):= \frac12 \big\| \textbf{A} \textbf{x}_t - \textbf{n}_t \big\|^2
	\!\right\},~\forall t\in[0,T]
\end{align}
\endgroup
where $n > d$, $\textbf{X}=(\textbf{x}_t \in \mathbb{R}^d)_{t\in[0,T]}$ is the stochastic process to be determined, $\mathcal{P}_t = \mathcal{N}(\textbf{A}\textbf{x}_t^*, \textbf{C})$ with $\textbf{A} \in \mathbb{R}^{n\times d}$ a fixed measurement matrix, and $\textbf{C} = (\sigma^2/n\|\textbf{A}\|^2)\textbf{I}_n$ the covariance matrix. The hidden parameter $\textbf{x}_t^*$ follows an Ornstein--Uhlenbeck (OU) process:
\begin{align}
\label{Eq:OU-process}
d\textbf{x}_t^* = \boldsymbol{\Theta}(\boldsymbol{\mu} -\textbf{x}_t^*) dt + \boldsymbol{\gamma} dw_t,
\end{align}
where $\boldsymbol{\Theta} \in \mathbb{R}^{n\times n}$, $\boldsymbol{\mu} \in \mathbb{R}^n$, and $\boldsymbol{\gamma} \in \mathbb{R}^n$ are the parameters of the OU process.

We require the gradient $\nabla_x f$ and the Malliavin derivative $D_s\nabla_x f$ for~\cref{Alg:stochastic-path-following}. These are given by $\nabla_x f(\textbf{x}_t,\textbf{n}_t) = \textbf{A}^\top (\textbf{A} \textbf{x}_t - \textbf{n}_t)$ and $D_s\nabla_x f(\textbf{x}_t,\textbf{n}_t) = \textbf{A}^\top (\textbf{A} D_s\textbf{x}_t - D_s\textbf{n}_t)$. Notably, computing $D_s\textbf{n}_t$ does not require knowledge of $\textbf{x}_t^*$. Under the factorization $\boldsymbol{\Theta} = \textbf{V}\textbf{A}$ with $\textbf{V} \in \mathbb{R}^{d\times n}$, we can directly obtain $D_s\textbf{n}_t = D_s\textbf{z}_t$, where $\textbf{z}_t := \textbf{A}\textbf{x}_t^*$ obeys the following SDE:
$$
d\textbf{z}_t = \boldsymbol{\Theta}'(\boldsymbol{\mu}' - \textbf{z}_t) dt + \boldsymbol{\gamma}' dw_t,
$$
with $\boldsymbol{\Theta}' = \textbf{A}\textbf{V}$, $\boldsymbol{\mu}' = \textbf{A}\boldsymbol{\mu}$, and $\boldsymbol{\gamma}' = \textbf{A}\boldsymbol{\gamma}$. Since $n > d$, the factorizing matrix $\textbf{V}$ can be determined as $\textbf{V} = (\textbf{A}^\top \textbf{A})^{-1}\textbf{A}^\top$.

\fakeparagraph%
{Problem and Algorithm Setup}
We primarily follow the setup of~\citep{Cutler2023} for both the problem parameters and the OU-process parameters (i.e., $\boldsymbol{\Theta}$, $\boldsymbol{\mu}$, $\boldsymbol{\sigma}$), all of which have uniform entries; we set $n=10$, $d=5$, $T=1$, and $\sigma=1$ for the time-varying distribution $\mathcal{P}_t$.

We use an NN architecture with three hidden layers, each containing 16 neurons. The SPF algorithm employs ${\rm Gelu}$ activation functions, while the PDM algorithm uses ${\rm LipSwish}$. For PDM, normalization layers are applied after each layer to enhance training stability. We adopt the Adam-type optimizer from~\citep{Adamconvergence2022} with a learning rate of $8\times10^{-3}$. The batch size is set to $p=100$, ensuring an $\epsilon$-stationary point with approximately $\epsilon = 10^{-5}$ precision.

\fakeparagraph%
{Results}
\Cref{subfig:LS-1,subfig:LS-2} compare SPF with the learning-based PDM and gradient-based PSG methods for $\sigma=1$ using two performance metrics: (i) the optimality distance $\mathbb{E}\| \textbf{x}_t - \textbf{x}_t^*\|^2$, which measures how close the recovered solution $\textbf{x}_t$ is to the true target $\textbf{x}_t^*$, and (ii) the objective suboptimality $\mathbb{E}|f(\textbf{x}_t) - f(\textbf{x}_t^*) |$, which measures the difference between the objective evaluated at $\textbf{x}_t$ and the optimum $f(\textbf{x}_t^*)$. Although the PDM algorithm achieves an optimality distance close to that of SPF in~\cref{subfig:LS-1}, SPF outperforms both PDM and PSG overall (see~\cref{Sec:scalability} for a scalability evaluation of SPF and PDM). The same trend holds in~\cref{subfig:LS-2} for objective suboptimality, where SPF exhibits the lowest error among the learning-based and gradient-based approaches.

These results can be explained by the fact that SPF enforces optimality more directly through its loss function, thereby providing a more structured search direction toward optimal solutions.

\subsection{Logistic Regression}
\label{Sec:LR}

\begin{figure*}[t!]
	\centering
	\hspace{-.5em}%
	\subcaptionbox{\label{Fig:scal1}}{%
		\includegraphics[width=0.25\textwidth]{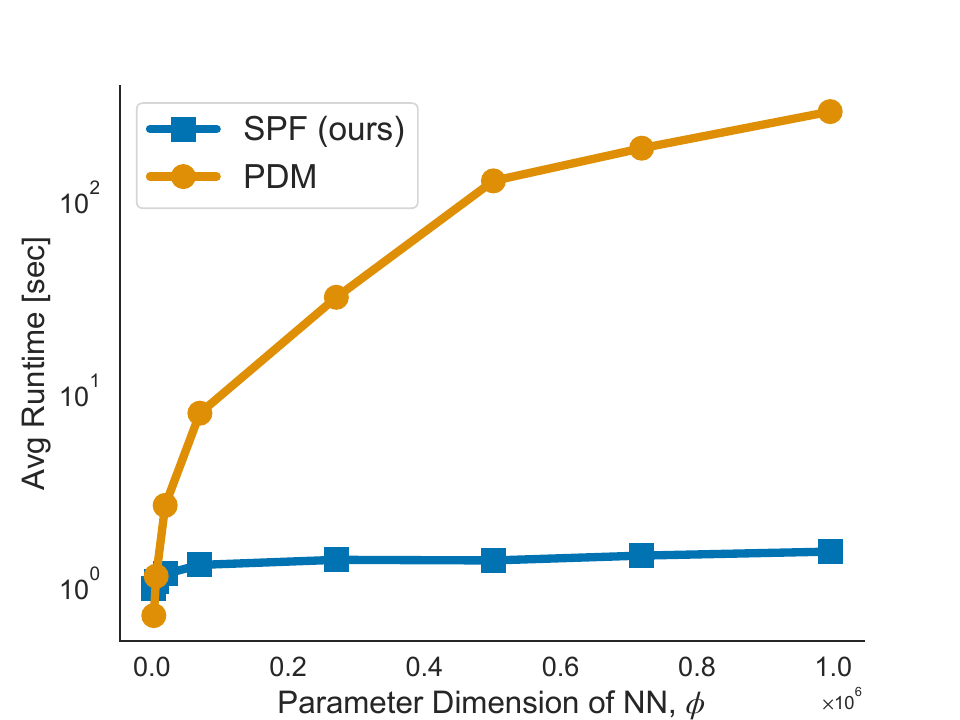}
	}%
	\hspace{8 pt}%
	\subcaptionbox{\label{Fig:scal2}}{%
		\vspace{-10pt}%
		\includegraphics[width=0.27\textwidth]{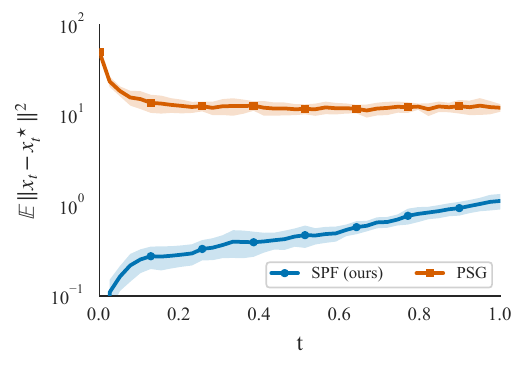}
	}
	\hspace{8 pt}%
	\subcaptionbox{\label{Fig:scal3}}{%
		\vspace{-10pt}%
		\includegraphics[width=0.27\textwidth]{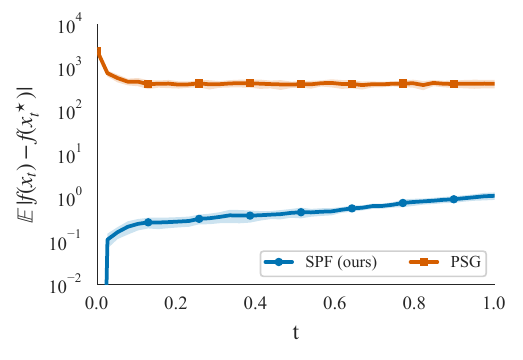}
	}
	\caption{%
		\subref{Fig:scal1} Average runtime on the least-squares recovery problem for SPF and PDM.
		\subref{Fig:scal2}-\subref{Fig:scal3} Performance of SPF on a high-dimensional least-squares recovery task compared with the gradient-based baseline method, PSG.
	}
	\label{Fig:avg-runtime}
\end{figure*}
We now consider an $\ell_2^2$-regularized logistic regression problem used in~\citep{Cutler2023}. Specifically, we study: %
\begingroup\makeatletter\def\f@size{8.75}\check@mathfonts
\begin{align}
	\label{Eq:LR-prob}
	\min_{\textbf{X}} \!\underset{(\textbf{x}_t, \textbf{b}_t)}{\mathbb{E}}\!
	&\bigg\{ 
	f(\textbf{x}_t, \textbf{b}_t) := \frac1n \bigg( \sum_{i=1}^n \log(1 + \exp(\textbf{a}_i^\top \textbf{x}_t)) - \textbf{b}_t^\top \textbf{A} \textbf{x}_t\bigg)\notag\\
	&~~~+\left. \frac{\mu}{2} \| \textbf{x}_t\|^2 
	\right\},\forall t\in[0,T]
\end{align}
\endgroup
where $\textbf{X}=(\textbf{x}_t \in \mathbb{R}^d)_{t\in[0,T]}$ is the stochastic process to be learned, matrix $\textbf{A} \in \mathbb{R}^{n\times d}$ has standard Gaussian rows $\{\textbf{a}_i\}_{i=1}^n$, $\textbf{b}_t \in [0,1]$ is a random sequence of soft labels, and $\mu > 0$ is the regularization coefficient. Moreover, $\textbf{b}_t$ follows a time-varying model $\textbf{b}_{t,i} = 1/(1+e^{-\textbf{y}_{t,i}})$, with $\textbf{y}_t$ obeying the OU process in~\cref{Eq:OU-process}, where $\textbf{y}_{t,i}$ (respectively, $\textbf{b}_{t,i}$) denotes the $i$-th component of $\textbf{y}_t$ (respectively, $\textbf{b}_t$).

Accordingly, we can compute the gradient $\nabla_x f$ and the Malliavin derivative $D_s\nabla_xf$ required to execute~\cref{Alg:stochastic-path-following}. Note that $D_s\nabla_xf$ depends on the Malliavin derivative $D_s\textbf{b}_t$, which can be derived by the chain rule as $D_s\textbf{b}_{t,i} = e^{-\textbf{y}_{t,i}}/\big(1 + e^{-\textbf{y}_{t,i}}\big)^2 D_s\textbf{y}_{t,i}$.

\fakeparagraph%
{Problem and Algorithm Setup}
We follow the same setup as in the previous section with $\mu=10^{-4}$; the SPF algorithm employs ${\rm Tanh}$ activation functions, while the PDM algorithm uses ${\rm Gelu}$ to ensure stability in this experiment. %

\fakeparagraph%
{Results}
\Cref{subfig:logreg-1,subfig:logreg-2} illustrates the performance of SPF compared with the learning-based PDM and gradient-based PSG methods in terms of the optimality distance $\mathbb{E}\| \textbf{x}_t - \textbf{x}_t^*\|^2$ and the objective suboptimality $\mathbb{E}|f(\textbf{x}_t) - f(\textbf{x}_t^*) |$. SPF achieves a better optimality distance than PDM and competitive overall performance. The same pattern is observed for the objective suboptimality in~\cref{subfig:logreg-2}, where SPF improves over the gradient-based PSG baseline and performs comparably to the conventional learning-based PDM method (see %
the next section for a scalability evaluation of SPF and PDM). SPF solves this logistic regression problem effectively by directly enforcing the optimality condition in its formulation.

\section{Scalability}
\label{Sec:scalability}

We finally evaluate the scalability of SPF against the conventional learning-based method (PDM) and the gradient-based baseline (PSG) in a high-dimensional setup.

First, we consider the least-squares recovery problem from~\cref{Sec:LS}, vary the dimension of the NN parameterization, and measure the average time required to update the network parameters for one episode. \Cref{Fig:scal1} presents the average runtime for SPF and PDM. The runtime of SPF remains largely unaffected by the NN parameter dimension~$\phi$, while PDM exhibits a strong dependence on~$\phi$, resulting in significantly higher computational complexity. This difference can be understood from the structural distinction between the two algorithms. SPF simulates the SDEs $\{\textbf{x}_t, \boldsymbol{\Gamma}_{r,t}\}$, which together have total dimension $n + n^2$. Hence, the dimensionality of the required SDEs is insensitive to the parameterization and does not involve differentiation with respect to it, making the SPF forward pass computationally efficient. In contrast, PDM must simulate both the original SDE and its variational counterpart, which together have dimension $n + n\phi$. Consequently, the dimension of the required SDEs scales directly with the parameterization dimension. These factors make PDM inefficient for high-dimensional NNs.

Next, we consider the least-squares problem in~\cref{Eq:LS-prob} in a high-dimensional setting with $n=100$ and $d=50$, while keeping the other configurations identical to those in~\cref{Sec:LS}. \Cref{Fig:scal2,Fig:scal3} compare the performance of SPF with that of the gradient-based PSG method; we do not evaluate PDM owing to its computational inefficiency in this high-dimensional setting. SPF clearly outperforms PSG, demonstrating a substantial performance advantage.

This scalability analysis confirms the effectiveness of the SPF algorithm for high-dimensional problems, both in terms of computational efficiency and overall performance.

\section{Conclusion}%
This work investigated time-varying stochastic optimization (TV-SO), a class of stochastic optimization problems. We derived non-parametric optimality conditions for these problems using Malliavin calculus and proposed a deep-learning algorithm that obtains optimal solutions in a manner insensitive to the parameterization dimension. We compared our method with both learning-based and gradient-based state-of-the-art approaches. The results highlight the advantages of the proposed algorithm in terms of performance and scalability for stochastic optimization problems subject to distributional drift.

Future research could focus on establishing non-parametric sufficient conditions 
and extending the proposed framework to applications such as time-series forecasting.

\section*{Impact statement}

This paper presents work whose goal is to advance the field of machine learning. There are many potential societal consequences of our work, none of which we feel must be specifically highlighted here.

\section*{Acknowledgments}
This work was partially funded by the Research Council of Finland under grants number 354322 and 357533.

\bibliography{tvso}
\bibliographystyle{icml2026}

\appendix
\onecolumn

\section{Preliminaries}

This section introduces preliminary results needed to prove Theorems~\ref{Thm:1} and~\ref{Thm:2}. We begin by recalling Stein’s lemma: for zero-mean Gaussian random variables $\textbf{x}\in\mathbb{R}^n$ and $\textbf{y}\in\mathbb{R}^n$, and a vector-valued,
square-integrable function $\textbf{f}:\mathbb{R}^n \to \mathbb{R}^n$, the following holds:
\begin{align}\label{Eq:SteinLemma}
	\mathbb{E}\left\{ \textbf{f}(\textbf{x})^\top \textbf{y} \right\} 
	= {\rm tr}\left\{ \mathbb{E}\left\{ \mathcal{J}_\textbf{x}\textbf{f}(\textbf{x}) \right\}^\top \mathbb{E}\left\{\textbf{x}\textbf{y}^\top \right\} \right\}
\end{align}

\begin{theorem}\label{Thm:app1}
	Assume that $\textbf{f}:\mathbb{R}^n \to \mathbb{R}^n$ is a vector-valued function in the domain of the Malliavin derivative; $\textbf{X}=(\textbf{x}_t)_{t\in[0,T]}$ is a Wiener-driven, Malliavin-differentiable, and square-integrable process; and $\{ \textbf{G}^l=(\textbf{g}^l_t)_t\}_{l=1}^d$, with $\textbf{g}^l_t \in \mathbb{R}^n$, are adapted and square-integrable processes. Then,
	$$
	\mathbb{E}\left\{ \textbf{f}(\textbf{x}_t) \sum_{l=1}^d \int_0^t \textbf{g}^l_s dw_s^l \right\} 
	= \sum_{l=1}^d \int_0^t \mathbb{E}\left\{ {\textbf{g}^l_s}^\top \mathcal{J}_\textbf{x}\textbf{f}(\textbf{x}_t) D_s^l\textbf{x}_t \right\}ds,
	$$
	where $D_s^l\textbf{x}_t$ is the Malliavin derivative of $\textbf{x}_t$ with respect to the Wiener process $w_s^l$.
	\begin{proof}
		This theorem extends the integration-by-parts formula from Malliavin calculus (see Section~\ref{sec:malliavin-derivative}) to the multivariate setting. Without loss of generality, we prove it for the case $\textbf{x}_t = \int_0^t \sum_{l=1}^d \textbf{h}_s^l dw_s^l$, which generalizes directly to other cases~\citep{nualart2018introduction,alos2021malliavin}. Let $\mathcal{F}^{\textbf{h},\textbf{g}}$ denote the $\sigma$-algebra generated by the processes $\textbf{h}^l$ and $\textbf{g}^l$ up to time $t$. Then,
		\begingroup\makeatletter\def\f@size{9.5}\check@mathfonts
		\begin{align*}
			&\mathbb{E} \left\{ \textbf{f}(\textbf{x}_t) \sum_{l=1}^d \int_0^t \textbf{g}_s^l dw_s^l \right\} 		
			=\mathbb{E}_{\mathcal{F}^{\textbf{h},\textbf{g}} } \mathbb{E}\left\{ \textbf{f}(\textbf{x}_t) \sum_{l=1}^d \int_0^t \textbf{g}_s^l dw_s^l  ~\Big|~ \mathcal{F}^{\textbf{h},\textbf{g}} \right\}\\
			&\hspace{60pt} \overset{(a)}{=} \mathbb{E}_{\mathcal{F}^{\textbf{h},\textbf{g}}}\:{\rm tr}\left\{  
			\sum_{l=1}^d \mathbb{E}\left\{ \mathcal{J}_\textbf{x}\textbf{f} \:\big|\: \mathcal{F}^{\textbf{h},\textbf{g}} \right\} 
			\mathbb{E}\left\{ \int_0^t \textbf{h}_s^l {\textbf{g}_s^l}^\top ds  \:\Big|\: \mathcal{F}^{\textbf{h},\textbf{g}} \right\}\right\}\\
			&\hspace{60pt} = \mathbb{E}_{\mathcal{F}^{\textbf{h},\textbf{g}}}\:\sum_{l=1}^d \int_0^t \mathbb{E}\left\{ {\textbf{g}_s^l}^\top\mathcal{J}_{\textbf{x}}\textbf{f}\:D_s^l\textbf{x}_t \:\Big|\: \mathcal{F}^{\textbf{h},\textbf{g}}\right\} ds
			= \sum_{l=1}^d \int_0^t \mathbb{E}\left\{ {\textbf{g}_s^l}^\top\mathcal{J}_{\textbf{x}}\textbf{f}\: D_s^l\textbf{x}_t \right\} ds
		\end{align*}
		\endgroup
		where (a) uses Stein’s lemma from~\cref{Eq:SteinLemma}, and (b) follows from ${\rm tr}\{\textbf{A}\textbf{z}\textbf{z}^\top\} = {\rm tr}\{\textbf{z}^\top \textbf{A}\textbf{z}\}$. The interchange of integral and expectation is justified, as $F$ is square-integrable and Malliavin-differentiable.
	\end{proof}
\end{theorem}

\section{Proof of Theorem~\ref{Thm:1}}
\label{App:proof1}

We use variational analysis to derive the necessary optimality conditions for Problem $\mbox{TV-SO}$. Let $\boldsymbol{X}=(\textbf{x}_t)_{t\in[0,T]}$ denote an optimal $\mathcal{F}$-adapted process, where $\textbf{x}_t$ depends on the paths of $\{\textbf{w}_s: 0 \leq s \leq t\}$. We therefore consider variations that depend on these paths. To this end, we analyze the variation of the objective $\mathbb{E}\{ f(\textbf{x}_t, \boldsymbol{\xi}_t, t) \}$ by introducing perturbations drawn from the Hilbert space of admissible variations:
\begin{align}\label{Eq:variation}
	\mathcal{V}_t = \bigg\{ \textbf{k}_t + \sum_{l=1}^d \int_0^t \textbf{g}_s^l dw_s^l~\Big|~\textbf{k}_s, \textbf{g}_s^l \in L^2([0,t];\mathbb{R}^n) \bigg\},
\end{align}
where $L^2([0,t];\mathbb{R}^n)$ is the space of $\mathbb{R}^n$-valued square-integrable functions. Here, $\textbf{g}^l_t$ represents a stochastic variation and $\textbf{k}_t$ a deterministic one. We perturb the optimal process $\textbf{X}$ along a direction $\textbf{d}_t \in \mathcal{V}_t$ using
$$
\textbf{x}_t^\epsilon = \textbf{x}_t + \epsilon \:\textbf{d}_t.
$$

Let the variation of $\mathbb{E}\{ f(\textbf{x}_t, \boldsymbol{\xi}_t, t) \}$ along $\textbf{d}$ be denoted as $\delta_\textbf{d}\mathbb{E}f(\textbf{x}_t, \boldsymbol{\xi}_t, t)$. We obtain
\begingroup\makeatletter\def\f@size{9.5}\check@mathfonts
\begin{align}
	\label{Eq:perturb}
	\delta_\textbf{d} \mathbb{E}f(\textbf{x}_t, \boldsymbol{\xi}_t, t) &= \lim_{\epsilon\to 0} 
	\frac{1}{\epsilon} \Big( \mathbb{E}\{ f(\textbf{x}_t^\epsilon, \boldsymbol{\xi}_t, t) \} - \mathbb{E}\{ f(\textbf{x}_t, \boldsymbol{\xi}_t, t) \} \Big)\\
	&= \mathbb{E}\left\{ \nabla_\textbf{x}f(\textbf{x}_t, \boldsymbol{\xi}_t, t)^\top \left( \textbf{k}_t + \sum_{l=1}^d \int_0^t \textbf{g}_s^l dw_s^l \right) \right\} \notag \\
	&\overset{(a)}{=} \mathbb{E}\left\{ \nabla_\textbf{x}f \right\}^\top \textbf{k}_t + \sum_{l=1}^d \int_0^t {\textbf{g}_s^l}^\top \mathbb{E}\left\{ \nabla_{\textbf{x}}^2f D_s\textbf{x}_t^l \right\} ds + \int_0^t {\textbf{g}_s^l}^\top \mathbb{E}\left\{ \nabla_{\textbf{x}\boldsymbol{\xi}}f \:D_s\boldsymbol{\xi}_t^l \right\} ds \notag \\
	&= \mathbb{E}\left\{ \nabla_\textbf{x}f \right\}^\top \textbf{k}_t + \sum_{l=1}^d \int_0^t {\textbf{g}_s^l}^\top \mathbb{E}\left\{ \nabla_{\textbf{x}}^2f D_s\textbf{x}_t^l + \nabla_{\textbf{x}\boldsymbol{\xi}}f \:D_s\boldsymbol{\xi}_t^l \right\} ds. \notag
\end{align}
\endgroup

Here, (a) follows from Theorem~\ref{Thm:app1} and the following identity derived from Stein’s lemma:
$$
\mathbb{E}\left\{ F(\textbf{x},\boldsymbol{\xi})^\top \textbf{y} \right\} 
= {\rm tr}\left\{ \mathbb{E}\left\{ \mathcal{J}_\textbf{x}F(\textbf{x},\boldsymbol{\xi}) \right\}^\top \mathbb{E}\left\{ \textbf{x}\textbf{y}^\top \right\} + \mathbb{E}\left\{ \mathcal{J}_{\boldsymbol{\xi}}F(\textbf{x},\boldsymbol{\xi}) \right\}^\top \mathbb{E}\left\{ \boldsymbol{\xi}\textbf{y}^\top \right\} \right\}.
$$
For $\textbf{x}_t$ to be optimal, $\delta_\textbf{d}\mathbb{E}f(\textbf{x}_t, \boldsymbol{\xi}_t, t) = \textbf{0}$ must hold for all $(\textbf{k}_t,(\textbf{g}_s)_{s\in[0,t]})$ in the Hilbert space $\mathcal{V}_t$. This jointly requires $\mathbb{E}\left\{ \nabla_\textbf{x}f \right\} = \textbf{0}$ and $\mathbb{E}\left\{ \nabla_{\textbf{x}}^2f D_s\textbf{x}_t^l + \nabla_{\textbf{x}\boldsymbol{\xi}}f \:D_s\boldsymbol{\xi}_t^l \right\} = \textbf{0}$ for all $s\in[0,t]$, in accordance with the fundamental lemma of the calculus of variations. Hence, the statement follows.

\section{Proof of Proposition~\ref{Propos:BackwardSDE}}
\label{App2}

According to~\cref{Eq:MalliavinDerv0}, the Malliavin derivative $D_r\textbf{x}_t$—for a given $r$ and as a function of $t$—is a diffusion process adapted to the filtration $\mathcal{F}_r$. Therefore, based on~\cref{Eq:MatrixMalliavin}, the $r$-to-$t$ stochastic flow $\boldsymbol{\Gamma}_{r,t} := \textbf{Y}_t \textbf{Z}_r$, for a given $r$, is adapted to $\mathcal{F}_r$. Moreover, from~\cref{Eq:Y-Z}, we obtain
$$
d\boldsymbol{\Gamma}_{r,t} = d\textbf{Y}_t \textbf{Z}_r 
= \mathcal{J}_\textbf{x}\textbf{a}(\textbf{x}_t,t)\textbf{Y}_t\textbf{Z}_r\,dt
+ \sum_{l=1}^d \mathcal{J}_\textbf{x}\textbf{b}_l(\textbf{x}_t,t)\textbf{Y}_t\textbf{Z}_r\,dw_t^l
= \Bigg( \sum_{l=1}^d \mathcal{J}_\textbf{x}\textbf{b}_l(\textbf{x}_t,t)\,dw_t^l 
+ \mathcal{J}_\textbf{x}\textbf{a}(\textbf{x}_t,t)\,dt \Bigg)\boldsymbol{\Gamma}_{r,t}.
$$
Hence, the statement follows.

\section{Details of the PDM Algorithm}
\label{App:PDM}

We adopt the PDM framework~\citep{tzen2019neural,liu2019neural} as a learning-based baseline for \emph{joint} optimization over the time horizon, i.e., optimization for every $t\in [0,T]$. This approach formulates \emph{parametric} optimality conditions based on a specific diffusion process. Note that the optimality conditions in~\cref{Eq:TV-SO} are non-parametric and do not depend on any particular parameterization policy. However, the $\mbox{TV-SO}$ problem can also be addressed through a parametric reformulation. To that end, we impose a parameterized SDE as in~\cref{Eq:parametricSDE}, so that, given an initial state $\textbf{x}_0$, the full trajectory $(\textbf{x}_t)_{t\in[0,T]}$ is determined by a set of parameters $\boldsymbol{\theta}$. The $\mbox{TV-SO}$ problem can then be solved via
\begin{align}\label{Eq:parametricTVSO}
	&\min_{\boldsymbol{\theta}}~~\mathbb{E}\big\{ f(\textbf{x}_t,\boldsymbol{\xi}_t,t) \big\},\quad \forall\, t\in [0,T], \notag \\
	& ~~ \text{s.t.}~~~ d\textbf{x}_t = \textbf{a}(\textbf{x}_t, t, \boldsymbol{\theta})\,dt
	+ \sum_{l=1}^d\textbf{b}_l(\textbf{x}_t, t, \boldsymbol{\theta})\,d\textbf{w}_t^l.
\end{align}

Taking the derivative with respect to $\boldsymbol{\theta}$ yields the condition
\[
\mathbb{E}\big\{ \nabla_\textbf{x}f(\textbf{x}_t,\boldsymbol{\xi}_t,t)^\top J_t^{\boldsymbol{\theta}} \big\}=\textbf{0},
\]
which must hold for every $t$. Here, $J_t^{\boldsymbol{\theta}} = \mathcal{J}_{\boldsymbol{\theta}}\textbf{x}_t$ is the pathwise Jacobian of $\textbf{x}_t$ with respect to $\boldsymbol{\theta}$ and satisfies the variational SDE
\begingroup\makeatletter\def\f@size{9}\check@mathfonts
\begin{align}\label{Eq:variationlSDE}
	d\boldsymbol{J}_t^{\boldsymbol{\theta}} = \big(\mathcal{J}_{\boldsymbol{\theta}} \textbf{a}(\textbf{x}_t,t,\boldsymbol{\theta}) + 
	\mathcal{J}_\textbf{x} \textbf{a}(\textbf{x}_t,t,\boldsymbol{\theta}) \boldsymbol{J}_t^{\boldsymbol{\theta}} \big) dt 
	+ \sum_{l=1}^d \big(\mathcal{J}_{\boldsymbol{\theta}}\textbf{b}_l(\textbf{x}_t, t,\boldsymbol{\theta})
	+\mathcal{J}_\textbf{x}\textbf{b}_l(\textbf{x}_t, t, \boldsymbol{\theta}) \boldsymbol{J}_t^{\boldsymbol{\theta}} \big) d\textbf{w}_t^l,
\end{align}
\endgroup
with the initial condition $\boldsymbol{J}_0^{\boldsymbol{\theta}}=\textbf{0}$. Since the sign of $f(\textbf{x}_t,\boldsymbol{\xi}_t,t)$ may vary with $t$, and the condition $\mathbb{E}\big\{ \nabla_\textbf{x}f^\top \mathcal{J}_{\boldsymbol{\theta}}\textbf{x}_t \big\}=\textbf{0}$ should hold jointly for all $t\in[0,T]$, the corresponding parametric loss is defined as
\begingroup\makeatletter\def\f@size{8.5}\check@mathfonts
\begin{align}\label{Eq:ParametricLoss}
	\!\!\!\!\mathcal{L}_{\rm bench}(\boldsymbol{\theta}) =\! \int_0^T\!\! \Big\| \frac{d}{dt} \mathbb{E}\{ \nabla_\textbf{x}f^\top \boldsymbol{J}_t^{\boldsymbol{\theta}}  \} \Big\|^2 dt 
	=\! \int_0^T\!\! \bigg\| \frac{1}{dt}\mathbb{E}
	\Big\{ 
	d\nabla_\textbf{x}f^\top \boldsymbol{J}_t^{\boldsymbol{\theta}}+
	\nabla_\textbf{x}f^\top d\boldsymbol{J}_t^{\boldsymbol{\theta}} +
	d\nabla_\textbf{x}f^\top d\boldsymbol{J}_t^{\boldsymbol{\theta}} 
	\Big\} \bigg\|^2 dt,
\end{align}
\endgroup
where the differentials inside the norm can be simplified using Itô’s formula. Note that the loss in~\cref{Eq:ParametricLoss} serves as the stochastic analogue of the deterministic loss in~\cref{eq:non-parametric-loss}, differing in that it explicitly depends on the parameters of the SDE in~\cref{Eq:parametricSDE}, whereas the losses in~\cref{Eq:crit2} and~\cref{Eq:crit3} do not. Accordingly, the PDM algorithm (referred to simply as PDM) proceeds as follows:
\begin{enumerate}
	\item \textbf{Forward pass:} jointly simulate the parametric SDE in~\cref{Eq:parametricSDE} and the variational SDE in~\cref{Eq:variationlSDE} under the current $\boldsymbol{\theta}$.
	\item \textbf{Loss evaluation:} compute the parametric loss $\mathcal{L}_{\rm bench}(\boldsymbol{\theta})$ using~\cref{Eq:ParametricLoss}.
	\item \textbf{Parameter update:} update the drift and diffusion parameters based on the computed loss.
\end{enumerate}

\section{Algorithm Setup for Baselines}
\label{App:toy-setup}
We use the same neural network structure for the SPF (respectively, PDM) algorithm. To this end, we consider a deep-learning architecture with $3$ hidden layers, each with $32$ neurons, equipped with normalization layers and activation functions ${\rm LipSwish(\cdot)}$ (respectively, ${\rm tanh(\cdot)}$). We employ the Adam optimizer with a learning rate in the range $[8\times10^{-3},10^{-2}]$. The number of batches, which simulate realizations of the diffusion process, is set to $400$.

\section{Functional Stochastic Optimization}
\label{App:FSO}

We next consider functional stochastic optimization (F-SO) problems, defined as follows:
\begin{equation}
	\begin{aligned}\label{Eq:F-SO}
		\mbox{F-SO}: \qquad
		&\min_{ \textbf{X} }  
		\!\!
		\underset{
			\begin{tiny} 
				\begin{array}{c}
					\textbf{X},\textbf{X}'\!\sim\!\mathbb{Q}\\
					\boldsymbol{\Xi},\boldsymbol{\Xi}'\!\sim\!\mathbb{P}
				\end{array} 
			\end{tiny} 
		}{\mathbb{E}}
		\int_0^T	f(\textbf{x}_t, \textbf{x}_t', \boldsymbol{\xi}_t,\boldsymbol{\xi}_t',t) dt,
	\end{aligned}
	\vspace{-5pt}
\end{equation}
where:
\begin{itemize}
	\item $\textbf{X}$ and $\boldsymbol{\Xi}$ are $\mathcal{F}$-adapted, Malliavin-differentiable processes.
	\item $\textbf{x}_t$ (respectively, $\boldsymbol{\xi}_t$) is an independent copy of $\textbf{x}_t'$ (respectively, $\boldsymbol{\xi}_t'$), i.e., the processes $\textbf{X}=(\textbf{x}_t)_{t\in[0,T]}$ and $\textbf{X}'=(\textbf{x}'_t)_{t\in[0,T]}$ (respectively, $\boldsymbol{\Xi}=(\boldsymbol{\xi}_t)_{t\in[0,T]}$ and $\boldsymbol{\Xi}'=(\boldsymbol{\xi}_t')_{t\in[0,T]}$) share the same law $\mathbb{Q}$ (respectively, $\mathbb{P}$) but arise from independent Wiener processes $\textbf{W}$ and $\textbf{W}'$.
	\item The function $f:\mathbb{R}^n\!\times\!\mathbb{R}^n\!\times\!\mathbb{R}^m\!\times\!\mathbb{R}^m\to\mathbb{R}$ is twice continuously differentiable with respect to its arguments, square-integrable, and symmetric in the pairs $(\textbf{x}_t,\textbf{x}_t')$ and $(\boldsymbol{\xi}_t,\boldsymbol{\xi}_t')$.
\end{itemize} 

The following theorem specifies \emph{non-parametric} optimality conditions for Problem $\mbox{F-SO}$.

\begin{theorem}
	\label{Thm:2}
	Consider Problem $\mbox{F-SO}$, where $\textbf{X}=(\textbf{x}_t)_{t\in[0,T]}$ and $\boldsymbol{\Xi}=(\boldsymbol{\xi}_t)_{t\in[0,T]}$ are two $\mathcal{F}$-adapted, Malliavin-differentiable processes driven by the Wiener process $\textbf{W}$. Moreover, $\textbf{x}_t'$ (respectively, $\boldsymbol{\xi}_t'$) is an independent copy of $\textbf{x}_t$ (respectively, $\boldsymbol{\xi}_t$). Then Problem $\mbox{F-SO}$ can be optimized by enforcing the following optimality conditions:
	\begin{align}\label{Eq:Conditions_fSO}
		&\mathbb{E}\bigg\{ 
		\int_s^T \!\!
		\big(
		\nabla_\textbf{x}^2 f D_s^i \textbf{x}_t + 
		\nabla_{\textbf{x}\boldsymbol{\xi}} f  D_s^i \boldsymbol{\xi}_t
		\big) \:dt
		\bigg\} = \textbf{0}, \notag \\
		&\mathbb{E}\big\{ \nabla_{\textbf{x}}f \big\} = \textbf{0},\qquad s\in[0,T]
	\end{align}
	for $i \in\{1,...,d\}$ and $s\in[0,T]$, where $f$ abbreviates $f(\textbf{x}_t, \textbf{x}_t',\boldsymbol{\xi}_t, \boldsymbol{\xi}_t',t)$, and $D_s^i \textbf{x}_t$ (respectively, $D_s^i \boldsymbol{\xi}_t$) denotes the Malliavin derivative of $\textbf{x}_t$ (respectively, $\boldsymbol{\xi}_t$) with respect to $w_t^i$.
\end{theorem}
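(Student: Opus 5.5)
I would mirror the variational argument used for \cref{Thm:1} in \cref{App:proof1}, now applied to the integrated functional over $[0,T]$ and accounting for the independent copy. Let $\textbf{X}$ be an optimal adapted process. For each $t$, perturb along $\textbf{d}_t=\textbf{k}_t+\sum_{l=1}^d\int_0^t \textbf{g}^l_s dw^l_s \in \mathcal{V}_t$, as in \cref{Eq:variation}. The variation must act consistently on $\textbf{X}'$, so $\textbf{x}'_t$ is perturbed by the same functional $\textbf{d}'_t$ of the independent Wiener process $\textbf{W}'$, with the same deterministic $\textbf{k}$ and $\textbf{g}^l$. This keeps $\textbf{X}$ and $\textbf{X}'$ identically distributed.

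\textbf{Differentiating and symmetrizing.} Differentiating at $\epsilon=0$ gives
\[
\mathbb{E}\int_0^T \big(\nabla_\textbf{x}f^\top \textbf{d}_t+\nabla_{\textbf{x}'}f^\top\textbf{d}'_t\big)dt .
\]
Because $f$ is symmetric in the pairs and the copies are exchangeable, swapping $(\textbf{W},\textbf{W}')$ shows the two terms are equal. The variation is therefore $2\,\mathbb{E}\int_0^T \nabla_\textbf{x}f^\top \textbf{d}_t\,dt$, so it suffices to analyse one term.

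\textbf{Integration by parts.} The $\textbf{k}$-part yields $\int_0^T \mathbb{E}\{\nabla_\textbf{x}f\}^\top\textbf{k}_t\,dt$. For the stochastic part, I condition on $\textbf{W}'$. Then $\textbf{x}'_t,\boldsymbol{\xi}'_t$ act as frozen parameters, and I apply \cref{Thm:app1} together with the chain-rule/Stein identity used in step (a) of \cref{App:proof1}. Since the Malliavin derivative is taken with respect to $\textbf{W}$, only $\textbf{x}_t$ and $\boldsymbol{\xi}_t$ contribute. This gives
\[
\sum_{l}\int_0^T\!\!\int_0^t {\textbf{g}_s^l}^\top\mathbb{E}\{\nabla^2_\textbf{x}f\,D^l_s\textbf{x}_t+\nabla_{\textbf{x}\boldsymbol{\xi}}f\,D^l_s\boldsymbol{\xi}_t\}\,ds\,dt .
\]
Applying Fubini over the region $0\le s\le t\le T$ rewrites this as
\[
\sum_l\int_0^T {\textbf{g}_s^l}^\top\,\mathbb{E}\Big\{\int_s^T(\cdots)\,dt\Big\}\,ds .
\]

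\textbf{Concluding.} Requiring the variation to vanish for all $\textbf{k}\in L^2$ and all $\textbf{g}^l\in L^2$ lets me apply the fundamental lemma of the calculus of variations separately to each part. This yields $\mathbb{E}\{\nabla_\textbf{x}f\}=\textbf{0}$ (a.e. in $t$) and the integrated Malliavin condition for every $s\in[0,T]$ and every $i$, which are the two conditions of \cref{Eq:Conditions_fSO}.

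\textbf{Main obstacle.} I expect the hardest step to be justifying the symmetrization and the conditional use of \cref{Thm:app1} when $f$ depends on the independent copy. I would handle it by working on the product Wiener space: conditioning on $\textbf{W}'$ makes $(\textbf{x}'_t,\boldsymbol{\xi}'_t)$ independent of $\textbf{W}$, so $D^l_s$ annihilates them. Square-integrability then justifies exchanging expectation with the $dt$ and $ds$ integrals.
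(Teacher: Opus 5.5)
Your proposal is correct and follows the paper's proof essentially step by step: the same paired perturbations $\textbf{d}_t,\textbf{d}'_t$ sharing the deterministic $\textbf{k},\textbf{g}^l$, integration by parts via \cref{Thm:app1}, a change of the order of the $s,t$ integrals, and the fundamental lemma of the calculus of variations. The differences are only in presentation. You use the symmetry of $f$ and the exchangeability of $(\textbf{W},\textbf{W}')$ up front to merge the two terms, and you condition on $\textbf{W}'$ to make explicit why the primed variables drop out of $D_s^l$. The paper instead carries both terms through the computation and invokes symmetry only at the end.
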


\begin{proof}
	As in Appendix~\ref{App:proof1}, we use the variational Hilbert space $\mathcal{V}_t$ from~\cref{Eq:variation} to compute the variation of the objective $\int_0^T f(\textbf{x}_t, \textbf{x}_t', \boldsymbol{\xi}_t,\boldsymbol{\xi}_t',t)\,dt$. The processes $\textbf{X}$ and $\textbf{X}'$ share the same dynamics but are driven by independent Wiener processes $\textbf{W}$ and $\textbf{W}'$. Accordingly, we consider the variational directions $\textbf{d}_t = \textbf{k}_t + \sum_{l=1}^d \int_0^t \textbf{g}_s^l dw_s^l$ and $\textbf{d}_t' = \textbf{k}_t + \sum_{l=1}^d \int_0^t \textbf{g}_s^l d{w_s^l}'$ for $\textbf{x}_t$ and $\textbf{x}_t'$, respectively. Let $D_s^i\textbf{x}_t'$ and $D_s^i\boldsymbol{\xi}_t'$ denote the Malliavin derivatives of $\textbf{x}_t'$ and $\boldsymbol{\xi}_t'$ with respect to ${w_s^i}'$. Then,
	\begingroup\makeatletter\def\f@size{9.}\check@mathfonts
	\begin{align*}
		\delta_\textbf{d} \mathbb{E}\int_0^T\!\! f&(\textbf{x}_t, \textbf{x}_t', \boldsymbol{\xi}_t,\boldsymbol{\xi}_t',t) dt
		= \int_0^T\!\! \mathbb{E}\left\{ \nabla_\textbf{x}f^\top \left( \textbf{k}_t + \sum_{l=1}^d \int_0^t\textbf{g}_s^l dw_s^l \right) 
		+ \nabla_{\textbf{x}'}f^\top \left( \textbf{k}_t + \sum_{l=1}^d \int_0^t\textbf{g}_s^l d{w_s^l}' \right)
		\right\} dt\\
		\overset{(a)}{=}& \int_0^T\!\! \mathbb{E}\left\{ \nabla_\textbf{x}f \right\}^\top \textbf{k}_t dt 
		+ \int_0^T\!\!\mathbb{E}\left\{ \nabla_{\textbf{x}'}f \right\}^\top \textbf{k}_t dt
		+ \sum_{l=1}^d \int_0^T\!\!\int_0^t {\textbf{g}_s^l}^\top \mathbb{E}\left\{ \nabla_{\textbf{x}}^2f D_s\textbf{x}_t^l 										 				+\nabla_{\textbf{x}\boldsymbol{\xi}}f \:D_s\boldsymbol{\xi}_t^l  \right\} ds\: dt \\
		&+ \sum_{l=1}^d \int_0^T\!\!\int_0^t {\textbf{g}_s^l}^\top \mathbb{E}\left\{ \nabla_{\textbf{x}'}^2f D_s{\textbf{x}_t^l}'
		+\nabla_{\textbf{x}'\boldsymbol{\xi}'}f \:D_s{\boldsymbol{\xi}_t^l}'  \right\} ds\: dt \\
		\overset{(b)}{=}& \int_0^T\!\!\mathbb{E}\left\{ \nabla_\textbf{x}f \right\}^\top \textbf{k}_t dt
		+ \int_0^T\!\! \mathbb{E}\left\{ \nabla_{\textbf{x}'}f \right\}^\top \textbf{k}_t dt
		+ \sum_{l=1}^d \int_0^T\!\! {\textbf{g}_s^l}^\top  \int_s^T \mathbb{E}\left\{ \nabla_{\textbf{x}}^2f D_s\textbf{x}_t^l 		+\nabla_{\textbf{x}\boldsymbol{\xi}}f \:D_s\boldsymbol{\xi}_t^l  \right\} dt\: ds \\
		&+ \sum_{l=1}^d \int_0^T\!\! {\textbf{g}_s^l}^\top \int_s^T  \mathbb{E}\left\{ \nabla_{\textbf{x}'}^2f D_s{\textbf{x}_t^l}'
		+\nabla_{\textbf{x}'\boldsymbol{\xi}'}f \:D_s{\boldsymbol{\xi}_t^l}'  \right\} dt\: ds,
	\end{align*}
	\endgroup
	where (a) uses \cref{Thm:app1}, and (b) follows from changing the order of integration.
	
	For $(\textbf{x}_t,\textbf{x}_t')$ to be optimal, we require $\delta_\textbf{d}\mathbb{E}\int_0^T f(\textbf{x}_t, \textbf{x}_t',\boldsymbol{\xi}_t,\boldsymbol{\xi}_t',t) dt= \textbf{0}$ for all $(\textbf{k}_t,(\textbf{g}_s)_{s\in[0,t]})$ in the Hilbert space $\mathcal{V}_t$. By the fundamental lemma of the calculus of variations and the symmetry of $f$ in $(\textbf{x}_t,\textbf{x}_t')$ and $(\boldsymbol{\xi}_t,\boldsymbol{\xi}_t')$, this jointly implies $\mathbb{E}\left\{ \nabla_\textbf{x}f \right\} = \textbf{0}$ for $t\in[0,T]$ and $\int_s^T \mathbb{E}\left\{ \nabla_{\textbf{x}}^2f D_s\textbf{x}_t^l + \nabla_{\textbf{x}\boldsymbol{\xi}}f \:D_s\boldsymbol{\xi}_t^l \right\}\:dt = \textbf{0}$ for $s\in[0,T]$, which yields the stated conditions.
\end{proof}

\begin{remark}
	Problem $\mbox{F-SO}$ can be optimized using the following energy-functional criterion:
	\begin{align}\label{Eq:crit3}
		\mathcal{L}_{F}(\textbf{x},D\textbf{x}) =& \sum_{i=1}^d \! \int_0^T \! 
		\Big\| \int_s^T \! \mathbb{E}
		\big\{
		\nabla_\textbf{x}^2 f \: D_s^i \textbf{x}_t 
		+\nabla_{\textbf{x}\boldsymbol{\xi}} f \: D_s^i \boldsymbol{\xi}_t 
		\big\}  dt 
		\Big\|^2 ds 
		+  \int_0^T \Big\|\mathbb{E}\big\{ \nabla_{\textbf{x}}f \big\} \Big\|^2 \: dt,
	\end{align}
	where $f$ is shorthand for $f(\textbf{x}_t, \textbf{x}_t',\boldsymbol{\xi}_t, \boldsymbol{\xi}_t',t)$.
\end{remark}

\subsection{Probability Law Following}
\label{App:plf}

In this section, we aim to construct a diffusion process whose marginal trajectory matches a given non-stationary distribution. Langevin dynamics~\citep{Brosse:2018:NIPS} can be applied in the case of stationary laws. Let $(\mathbb{P}(\cdot,t))_{t \in[0,T]}$ be a family of probability laws. We seek a diffusion process $(\textbf{x}_t)_{t \in [0,T]}$ inducing the law $\mathbb{Q}(\cdot,t)=\text{Pr}(\textbf{x}_t\in\cdot)$ such that $\mathbb{Q}(\textbf{x}_t,t) = \mathbb{P}(\textbf{x}_t,t)$ for almost every $t\in[0,T]$.

We employ the kernelized Stein discrepancy (KSD)~\citep{Liu2016} to enforce similarity between the distributions $\mathbb{P}$ and $\mathbb{Q}$. This discrepancy uses a smooth, positive definite kernel $K(\textbf{x}, \textbf{x}')$ and is defined as
$
S(\mathbb{Q},\mathbb{P}) = E_{\textbf{x},\textbf{x}'\sim \mathbb{Q}}\{ \kappa_\mathbb{P}(\textbf{x},\textbf{x}') \},
$
where
$$
\kappa_\mathbb{P}(\textbf{x},\textbf{x}') 
= \textbf{s}_\mathbb{P}(\textbf{x})^\top K(\textbf{x},\textbf{x}')\textbf{s}_\mathbb{P}(\textbf{x}') 
+ \textbf{s}_\mathbb{P}(\textbf{x})^\top \nabla_{\textbf{x}'}K(\textbf{x},\textbf{x}')
+ \nabla_{\textbf{x}}K(\textbf{x},\textbf{x}')^\top \textbf{s}_\mathbb{P}(\textbf{x}') 
+ {\rm tr}(\nabla_{\textbf{x},\textbf{x}'}K(\textbf{x},\textbf{x}') ),
$$
and $\textbf{s}_\mathbb{P}(\textbf{x})$ is the score function of $\mathbb{P}$, i.e., $\textbf{s}_\mathbb{P}(\textbf{x})= \nabla_\textbf{x} \log \mathbb{P}(\textbf{x})$.
We then formulate an $\mbox{F-SO}$ problem that uses the average KSD as the objective and minimize it over the diffusion process in~\cref{Eq:diffusion}:
\begingroup\makeatletter\def\f@size{9.}\check@mathfonts
\begin{align}\label{Eq:Exp3}
	\min_{{\rm Diff}^{\textbf{a},\textbf{B}}} E_{\textbf{x},\textbf{x}'\sim \mathcal{\mathbb{Q}}} ~\frac1T \int_0^T \!\kappa_{\mathbb{P}(\cdot,t)}(\textbf{x}_t,\textbf{x}_t') dt.
\end{align}
\endgroup
To minimize this objective, we use the optimality condition $\mathcal{L}_F$ in~\cref{Eq:crit3} and apply the SPF algorithm with learning loss $\mathcal{L}_F$ to enforce alignment between the induced and target laws.

\noindent \textbf{Experiment:}
We now consider several target distributions, with $\mathbb{P}_i(\cdot,t) = 0.3 \mathcal{N}(\mu_{i,1}(t), \sigma_{i,1}(t)^2) + 0.7 \mathcal{N}(\mu_{i,2}(t), \sigma_{i,2}(t)^2)$ for $i\in\{1,2\}$, where
$\boldsymbol{\mu}_1(t) = [4t,-4t]$,
$\boldsymbol{\mu}_2(t) = [4-4t,4t-4]$,
$\boldsymbol{\sigma}_1^2(t) = \boldsymbol{\sigma}_2^2(t) = [0.5+t,0.5+2t]$.
We also consider $\mathbb{P}_i(\cdot,t) = \mathcal{N}(0, \sigma_{i}(t)^2)$ for $i\in\{3,4\}$, where $\sigma_3^2(t) = 0.5+3.5t$ and $\sigma_4^2(t) = 4-3.5t$, and $\mathbb{P}_5(\cdot,t) = \mathcal{N}(4t, 1)$, all for $t\in[0,1]$.

For comparison, we employ the Finite Dimensional Matching (FDM) and Truncated Signature kernel (TruncSig) approaches~\citep{Bonnier2024truncSig,Zhang2025FDM} as learning-based baselines. 
These methods use neural SDEs together with scoring rules designed to learn probability laws of diffusion processes. 
FDM employs a rule tailored to Markov processes, whereas TruncSig uses a rule based on truncated signature kernels~\citep{Zhang2025FDM}. We apply these approaches to solve the F-SO problem in~\cref{Eq:Exp3} and compare their performance with that of SPF.

\noindent \textbf{Settings:}
For this experiment, we use a Gaussian kernel
$$
K(\textbf{x},\textbf{x}') = \exp\left\{  \| \textbf{x}-\textbf{x}'\|^2/(2\sigma^2) \right\},
$$
with bandwidth $\sigma=1$~\citep{Zhang2025FDM}. The neural architecture consists of $4$ hidden layers, each with $32$ neurons and ${\rm tanh(\cdot)}$ activations. We employ Adam\,/\,AdamW optimizers with learning rates in $[6\times10^{-4},10^{-3}]$, and set the number of batches to $500$. For FDM and TruncSig, we use the respective scoring rules proposed in~\citep{Zhang2025FDM,Bonnier2024truncSig} as training losses for learning the probability laws.

\noindent \textbf{Results:}
\begin{table*}[t!]
	\centering
	\setlength{\tabcolsep}{14pt} 
	\caption{
		\textbf{Average KSD (mean $\pm$ std) for the law-following problems with $\mathbb{P}_i$ defined in Section~\cref{App:plf}.}
		Comparison of our SPF algorithm against SOTA methods (FDM and TruncSig).
		SPF outperforms SOTA for most of the considered laws $\mathbb{P}_i$.}
	\label{Tab2:quantative-results}
	\small~\\[.5em]
	\begin{tabular}{@{} l  c c c c c @{}}
		\toprule
		\textbf{Model}       &  $\mathbb{P}_1$ & $\mathbb{P}_2$ & $\mathbb{P}_3$ & $\mathbb{P}_4$ & $\mathbb{P}_5$\\
		\midrule
		\textbf{SPF} (\textbf{ours}) & \textbf{0.04 $\pm$ 0.01} & \textbf{0.037 $\pm$ 0.004} & \textbf{0.02 $\pm$ 0.005} & 0.018 $\pm$ 0.005 & \textbf{0.011 $\pm$ 0.003}\\
		\textbf{FDM}         & 0.08 $\pm$ 0.015 & 0.051 $\pm$ 0.008 & 0.028 $\pm$ 0.015 & \textbf{0.016 $\pm$ 0.003} & 0.015 $\pm$ 0.005 \\
		\textbf{TruncSig}    & 0.08 $\pm$ 0.01 & 0.16 $\pm$ 0.022 & 0.07 $\pm$ 0.015 & 0.22 $\pm$ 0.05 & 0.12 $\pm$ 0.02\\
		\bottomrule
	\end{tabular}
\end{table*}
Tab.~\ref{Tab2:quantative-results} reports the performance of SPF relative to the state-of-the-art methods FDM and TruncSig in terms of average KSD (mean $\pm$ std) for the law-following problems with $\mathbb{P}=\mathbb{P}_i$, $i\in\{1,\dots,5\}$. The \emph{mean} and \emph{std} of the average KSD are computed from multiple episodes after convergence. Our method outperforms these baselines for most target laws and achieves comparable performance for $\mathbb{P}_4$. SPF attains these results by directly enforcing the optimality condition through its loss function, thereby providing a more structured search direction toward optimal solutions.

\end{document}

%% file: math_commands.tex
\usepackage{amsmath,amsfonts,bm}

\def\eqref#1{equation~\ref{#1}}

\def\1{\bm{1}}

\DeclareMathAlphabet{\mathsfit}{\encodingdefault}{\sfdefault}{m}{sl}
\SetMathAlphabet{\mathsfit}{bold}{\encodingdefault}{\sfdefault}{bx}{n}